\title{Stochastic ADMM for Nonsmooth Optimization}
\author{
Hua Ouyang\\
Computational Science and Engineering\\
Georgia Institute of Technology\\
\texttt{houyang@cc.gatech.edu}
\And
Niao He\\
Industrial and Systems Engineering\\
Georgia Institute of Technology\\
\texttt{nhe6@isye.gatech.edu}
\And
Alexander Gray\\
Computational Science and Engineering\\
Georgia Institute of Technology\\
\texttt{agray@cc.gatech.edu}
}
\newtheorem{assumption}[]{Assumption}
\newtheorem{remark}[]{Remark}
\newtheorem{lemma}[]{Lemma}
\newtheorem{theorem}[]{Theorem}
\newtheorem{claim}[]{Claim}
\def \Pr{\textrm{Prob}}
\def \E{\mathbb{E}}
\begin{document}

\maketitle

\begin{abstract}
We present a stochastic setting for optimization problems with nonsmooth convex separable objective functions  over linear equality constraints. To solve such problems, we propose a stochastic Alternating Direction Method of Multipliers (ADMM) algorithm. Our algorithm applies  to a more general class of nonsmooth convex functions that does not necessarily have a closed-form solution by minimizing the augmented function directly. We also demonstrate the rates of convergence for our algorithm under various structural assumptions of the stochastic functions: $O(1/\sqrt{t})$ for convex functions and $O(\log t/t)$ for strongly convex functions. Compared to previous literature,  we establish the convergence rate of ADMM algorithm, for the first time,  in terms of both the objective value and the feasibility violation.
\end{abstract}

\footnotetext[1]{A short version appears in the 5th NIPS Workshop on Optimization for Machine Learning, Lake Tahoe, Nevada, USA, 2012.}

\section{Introduction}
The Alternating Direction Method of Multipliers (ADMM) \cite{glowinski75admm,gabay76admm} is a very simple computational method for constrained optimization proposed in 1970s. The theoretical aspects of ADMM have been studied from 1980s to 90s and its global convergence was established in the literature \cite{gabay83ammvi,glowinski89alosmnm,eckestein92odrsmppa}. As reviewed in the comprehensive paper \cite{boyd10dosladmm}, with its capacity of dealing with objective functions separately and synchronously, this method turned out to be a natural fit in the field of large-scale data-distributed machine learning and big-data related optimization and therefore received significant amount of attention in the last few years. Intensive theoretical and practical advances are conducted thereafter. On the theoretical hand,  ADMM is recently shown to have a rate of convergence of $O(1/N)$ \cite{monteiro10icbba,he12ocrdradm, he12oncrdradmm,wang12oadm}, where $N$ stands for the number of iterations. On the practical hand, ADMM has been applied to a wide range of application domains, such as compressed sensing \cite{yang11adal1}, image restoration \cite{goldstein09sbml1}, video processing and matrix completion \cite{goldfarb10falmmstcf}. Besides that, many variations of this classical method have been recently developed, such as linearized \cite{goldfarb10falmmstcf,zhang11ladmm,yang12laladmnnm},  accelerated \cite{goldfarb10falmmstcf}, and online \cite{wang12oadm} ADMM. However, most of these variants including the classic one implicitly assume full accessibilty to true data values, while in reality one can hardly ignore the existence of noise. A more natural way of handling this issue is to consider unbiased or even biased observations of true data, which leads us to the stochastic setting.

\subsection{Stochastic Setting for ADMM}
In this work, we study a family of convex optimization problems in which our objective functions are separable and stochastic. In particular, we are interested in solving the following linear equality-constrained stochastic optimization:
\begin{equation}\label{eq:problem}
\min_{\mathbf{x}\in\mathcal{X},\mathbf{y}\in\mathcal{Y}} \mathbb{E}_{\boldsymbol{\xi}}\theta_1(\mathbf{x},\boldsymbol{\xi}) + \theta_2(\mathbf{y})\ \ \text{s.t. } A\mathbf{x}+ B\mathbf{y} = \mathbf{b},
\end{equation}
where $\mathbf{x}\in\mathbb{R}^{d_1}, \mathbf{y}\in\mathbb{R}^{d_2}, A\in\mathbb{R}^{m\times d_1}, B\in\mathbb{R}^{m\times d_2}, \mathbf{b}\in\mathbb{R}^{ m}$, $\mathcal{X}$ is a convex compact set, and $\mathcal{Y}$ is a closed convex set. We are able to draw a sequence of identical and independent (i.i.d.) observations from the random vector $\boldsymbol{\xi}$ that obeys a fixed but unknown distribution $P$.  One can see that when $\boldsymbol{\xi}$ is deterministic, we can recover the traditional problem setting for ADMM \cite{boyd10dosladmm}.  Denote the expectation function $\theta_1(\mathbf{x})\equiv \E_{\boldsymbol{\xi}}\theta_1(\mathbf{x},\boldsymbol{\xi})$. In our most general setting, real-valued functions $\theta_1(\cdot)$ and $\theta_2(\cdot)$ are convex but not necessarily continuously differentiable.

Note that our stochastic setting of the problem is quite different from that of the Online ADMM proposed in \cite{wang12oadm}. In Online ADMM, one does not assume $\boldsymbol{\xi}$ to be i.i.d., nor the objective to be stochastic, but instead, a deterministic concept referred as regret is concerned: $R\left(\mathbf{x}_{[1:t]}\right)\equiv \sum_{k=1}^t\left[ \theta_1(\mathbf{x}_k,\boldsymbol{\xi}_k) + \theta_2(\mathbf{y}_k)\right] - \inf_{A\mathbf{x}+B\mathbf{y}=\mathbf{b}}\sum_{k=1}^t \left[ \theta_1(\mathbf{x}, \boldsymbol{\xi}_k) + \theta_2(\mathbf{y}) \right]$.

\subsection{Our Contributions}
In this work, we propose a stochastic setting of the ADMM problem and design the Stochastic ADMM algorithm.  A key algorithmic feature of our Stochastic ADMM that distinguishes it from previous ADMM and variants is the first-order approximation of $\theta_1$ that we use to modify the augmented Lagrangian. This simple modification not only guarantees the convergence of our stochastic method, but also benefits to a more general class of convex objective functions which might not have a closed-form solution in minimizing the augmented $\theta_1$ directly. For example, with stochastic ADMM,  we can derive close-form updates for the nonsmooth hinge loss function (used in support vector machines). However, with deterministic ADMM, one has to call SVM solvers during each iteration \cite{boyd10dosladmm}, which is indeed very time-consuming. One of our main contributions is that we develop the convergence rates of our algorithm under various structural assumptions. For convex $\theta_1(\cdot)$, the rate is proved to be $O(1/\sqrt{t})$; for strongly convex $\theta_1(\cdot)$, the rate is proved to be $O(\log t/t)$. To the best of our knowledge, this is the first time that convergence rates of ADMM are established for both the objective value and the feasibility violation. By contrast, recent research \cite{he12ocrdradm,wang12oadm} only shows the convergence of ADMM indirectly in terms of the satisfaction of variational inequalities.

\subsection{Notations}
Throughout this paper, we denote the subgradients of $\theta_1$ and $\theta_2$ as $\theta_1^{\prime}$ and $\theta_2^{\prime}$. When they are differentiable, we will use $\nabla \theta_1$ and $\nabla \theta_2$ to denote the gradients. We use the notation $\theta_1$ both for the instance function value $\theta_1(\mathbf{x},\boldsymbol{\xi})$ and for its expectation $\theta_1(\mathbf{x})$. We denote by $\theta(\mathbf{u}) \equiv \theta_1(\mathbf{x}) + \theta_2(\mathbf{y})$  the sum of the stochastic and the deterministic functions. For simplicity and clarity, we will use the following notations to denote stacked vectors or tuples:
\begin{equation}\label{eq:notations}
\begin{split}
&\mathbf{u} \equiv \left( \begin{array}{c}
\mathbf{x} \\
\mathbf{y} \end{array} \right),\ \ 
\mathbf{w} \equiv \left( \begin{array}{c}
\mathbf{x} \\
\mathbf{y}\\
\boldsymbol{\lambda} \end{array} \right),\ \ 
\mathbf{w}_k\equiv \left( \begin{array}{c}
\mathbf{x}_k \\
\mathbf{y}_k\\
\boldsymbol{\lambda}_k\end{array} \right),\ \ 
\mathcal{W}\equiv \left( \begin{array}{c}
\mathcal{X} \\
\mathcal{Y}\\
\mathbb{R}^m \end{array} \right),\\
&\bar{\mathbf{u}}_k \equiv\left( \begin{array}{c}
\frac{1}{k}\sum_{i=0}^{k-1}\mathbf{x}_i \\
\frac{1}{k}\sum_{i=1}^{k}\mathbf{y}_i \end{array} \right), 
\ \bar{\mathbf{w}}_k \equiv \left( \begin{array}{c}
\frac{1}{k}\sum_{i=1}^{k}\mathbf{x}_i \\
\frac{1}{k}\sum_{i=1}^{k}\mathbf{y}_i\\
\frac{1}{k}\sum_{i=1}^{k}\boldsymbol{\lambda}_i\\
\end{array}\right),\ F(\mathbf{w}) \equiv \left( \begin{array}{c}
-A^T\boldsymbol{\lambda} \\
-B^T\boldsymbol{\lambda}\\
A\mathbf{x}+B\mathbf{y}-\mathbf{b}\end{array} \right).\\
\end{split}
\end{equation}
For a positive semidefinite matrix $G\in\mathbb{R}^{d_1\times d_1}$, we define the $G$-norm of a vector $\mathbf{x}$ as
$\|\mathbf{x}\|_{G}: = \|G^{1/2}\mathbf{x}\|_2 = \sqrt{\mathbf{x}^T G \mathbf{x}}$. We use $\left\langle \cdot,\cdot\right\rangle$ to denote the inner product in a finite dimensional Euclidean space. When there is no ambiguity, we often use $\|\cdot\|$ to denote the Euclidean norm $\|\cdot\|_2$. We assume that the optimal solution of (\ref{eq:problem}) exists and denote it as $\mathbf{u}_*\equiv \left( \mathbf{x}_*^T,
\mathbf{y}_*^T \right)^T $. The following quantity appear frequently in our convergence analysis:\begin{equation}\label{eq:additional_notations}
\begin{split}
&\boldsymbol{\delta}_{k} \equiv \theta_1^{\prime}(\mathbf{x}_{k-1},\boldsymbol{\xi}_{k}) -  \theta_1^{\prime}(\mathbf{x}_{k-1}), \\
&D_{\mathcal{X}}\equiv\sup_{\mathbf{x}_a,\mathbf{x}_b\in\mathcal{X}}\|\mathbf{x}_a-\mathbf{x}_b\|,\ \ D_{\mathbf{y}_*,B}\equiv\|B(\mathbf{y}_0-\mathbf{y}_*)\|.
\end{split}
\end{equation}

\subsection{Assumptions}
Before presenting the algorithm and convergence results, we list the assumptions that will be used in our statements.

\begin{assumption}\label{weak assumption}
For all $\mathbf{x}\in\mathcal{X}$, $\mathbb{E} \Big[ \| \theta_1^{\prime}(\mathbf{x},\boldsymbol{\xi})\|^2 \Big] \leq M^2$.
\end{assumption}

\begin{assumption}\label{strong assumption}
For all $\mathbf{x}\in\mathcal{X}$, $\mathbb{E}\Big[\exp\Big\{ \| \theta_1^{\prime}(\mathbf{x},\boldsymbol{\xi})\|^2 /M^2\Big\}\Big] \leq \exp\{1\}$.
\end{assumption}

\begin{assumption}\label{sigma_assumption}
For all $\mathbf{x}\in\mathcal{X}$, $\mathbb{E} \left[ \| \theta_1^{\prime}(\mathbf{x},\boldsymbol{\xi}) -  \theta_1^{\prime}(\mathbf{x})\|^2 \right] \leq \sigma^2$.
\end{assumption}

\section{Stochastic ADMM Algorithm}
Directly solving problem (\ref{eq:problem}) can be nontrivial  even if $\boldsymbol{\xi}$ is deterministic and the equality constraint is as simple as $\mathbf{x}-\mathbf{y}=\mathbf{0}$. For example, using the augmented Lagrangian method, one has to minimize the augmented Lagrangian:
\begin{equation*}
\min_{\mathbf{x}\in\mathcal{X},\mathbf{y}\in\mathcal{Y}}\mathcal{L}_{\beta}(\mathbf{x},\mathbf{y},\lambda) \equiv \min_{\mathbf{x}\in\mathcal{X},\mathbf{y}\in\mathcal{Y}} \theta_1(\mathbf{x}) + \theta_2(\mathbf{y}) - \left\langle\boldsymbol\lambda, A\mathbf{x}+B\mathbf{y}-\mathbf{b}\right\rangle + \frac{\beta}{2}\|A\mathbf{x}+B\mathbf{y}-\mathbf{b}\|^2,
\end{equation*}
where $\beta$ is a pre-defined penalty parameter. This problem is at least not easier than solving the original one. The (deterministic) ADMM (Alg.\ref{alg:deterministic_admm}) solves this problem in a Gauss-Seidel manner: minimizing $\mathcal{L}_\beta$ w.r.t. $\mathbf{x}$ and $\mathbf{y}$ alternatively given the other fixed, followed by a penalty update over the Lagrangian multiplier $\boldsymbol{\lambda}$.
\begin{algorithm}
\caption{\textsf{Deterministic ADMM}}
\label{alg:deterministic_admm}
\begin{algorithmic}
\STATE [0.] Initialize $\mathbf{y}_0$ and $\boldsymbol{\lambda}_0=0$.
\FOR{$k=0,1,2,\ldots$}
  \STATE[1.] $\mathbf{x}_{k+1}\leftarrow \arg\min_{\mathbf{x}\in\mathcal{X}} \left\{ \theta_1(\mathbf{x}) + \frac{\beta}{2}\left\|\left(A\mathbf{x}+B\mathbf{y}_k-\mathbf{b}\right) - \frac{\boldsymbol{\lambda}_k}{\beta}\right\|^2 \right\}$.
  \STATE[2.] $\mathbf{y}_{k+1}\leftarrow \arg\min_{\mathbf{y}\in\mathcal{Y}} \left\{ \theta_2(\mathbf{y}) + \frac{\beta}{2}\left\|\left(A\mathbf{x}_{k+1}+B\mathbf{y}-\mathbf{b}\right) - \frac{\boldsymbol{\lambda}_k}{\beta}\right\|^2 \right\}$.
  \STATE[3.] $\boldsymbol{\lambda}_{k+1} \leftarrow \boldsymbol{\lambda}_{k} - \beta\left(A\mathbf{x}_{k+1} + B\mathbf{y}_{k+1} -\mathbf{b} \right)$.
\ENDFOR
\end{algorithmic}
\end{algorithm}

A variant deterministic algorithm named linearized ADMM replaces Line 1 of Alg.\ref{alg:deterministic_admm} by
\begin{equation}\label{eq:linearized_ADMM}
\mathbf{x}_{k+1}\leftarrow \arg\min_{\mathbf{x}\in\mathcal{X}} \left\{ \theta_1(\mathbf{x}) + \frac{\beta}{2}\left\|\left(A\mathbf{x}+B\mathbf{y}_k-\mathbf{b}\right) - \boldsymbol{\lambda}_k/\beta \right\|^2 + \frac{1}{2}\|\mathbf{x}-\mathbf{x}_k\|_G^2 \right\},
\end{equation}
where $G\in\mathbb{R}^{d_1\times d_1}$ is positive semidefinite. This variant can be regarded as a generalization of the original ADMM. When $G=0$, it is the same as Alg.\ref{alg:deterministic_admm}. When $G=rI_{d_1}-\beta A^T A$, it is equivalent to the following linearized proximal point method:
\begin{equation*}
\mathbf{x}_{k+1} \leftarrow \arg\min_{\mathbf{x}\in\mathcal{X}}\left\{ \theta_1(\mathbf{x}) + 
\beta(\mathbf{x}-\mathbf{x}_k)^T\left[A^T(A\mathbf{x_k}+B\mathbf{y}_k-\mathbf{b}-\boldsymbol{\lambda}_k/\beta) \right] + \frac{r}{2}\|\mathbf{x}-\mathbf{x}_k\|^2 \right\}.
\end{equation*}
Note that the linearization is applied only to the quadratic function $\|(A\mathbf{x}+B\mathbf{y}_k-\mathbf{b})-\boldsymbol{\lambda}_k/\beta\|^2$, but not to $\theta_1$. This approximation helps when Line 1 of Alg.\ref{alg:deterministic_admm} does not produce a closed-form solution given the quadratic term. For example, let $\theta_1(\mathbf{x}) = \|\mathbf{x}\|_1$ and $A$ not identity.

As shown in Alg.\ref{alg:stochastic_admm}, we propose a \textit{Stochastic Alternating Direction Method of Multipliers} (\textit{Stochastic ADMM}) algorithm. Our algorithm shares some features with the classical and the linearized ADMM. One can see that Line 2 and 3 are essentially the same as before. However, there are two major differences in Line 1. First, we replace $\theta_1(\mathbf{x})$ with a first-order approximation of $\theta_1(\mathbf{x},\boldsymbol{\xi}_{k+1})$ at $\mathbf{x}_k$: $\theta_1(\mathbf{x}_k) + \mathbf{x}^T \theta_1^{\prime}(\mathbf{x}_k,\boldsymbol{\xi}_{k+1})$. This approximation has the same flavour of the stochastic mirror descent \cite{nemirovski09rsaasp} used for solving a one-variable stochastic convex problem. One important benefit of using this approximation is that our algorithm can be applied to nonsmooth objective functions, beyond the smooth and separable least squares loss used in lasso. Second, similar to the linearized ADMM (\ref{eq:linearized_ADMM}), we add an $l_2$-norm prox-function $\|\mathbf{x}-\mathbf{x}_k\|^2$ but scale it by a time-varying stepsize $\eta_{k+1}$. As we will see in Section \ref{sec:analysis}, the choice of  this stepsize is crucial in guaranteeing a convergence.

\begin{algorithm}
\caption{\textsf{Stochastic ADMM}}
\label{alg:stochastic_admm}
\begin{algorithmic}
\STATE [0.] Initialize $\mathbf{x}_0,\mathbf{y}_0$ and $\boldsymbol{\lambda}_0=0$.
\FOR{$k=0,1,2,\ldots$}
  \STATE[1.] $\mathbf{x}_{k+1}\leftarrow \arg\min_{\mathbf{x}\in\mathcal{X}} \left\{ \left\langle \theta_1^{\prime}(\mathbf{x}_k,\boldsymbol{\xi}_{k+1}), \mathbf{x} \right\rangle + \frac{\beta}{2}\left\|\left(A\mathbf{x}+B\mathbf{y}_k-\mathbf{b}\right) - \frac{\boldsymbol{\lambda}_k}{\beta}\right\|^2 + \frac{\|\mathbf{x}-\mathbf{x}_k\|^2}{2\eta_{k+1}} \right\}$.
  \STATE[2.] $\mathbf{y}_{k+1}\leftarrow \arg\min_{\mathbf{y}\in\mathcal{Y}} \left\{ \theta_2(\mathbf{y}) + \frac{\beta}{2}\left\|\left(A\mathbf{x}_{k+1}+B\mathbf{y}-\mathbf{b}\right) - \frac{\boldsymbol{\lambda}_k}{\beta}\right\|^2 \right\}$.
  \STATE[3.] $\boldsymbol{\lambda}_{k+1} \leftarrow \boldsymbol{\lambda}_{k} - \beta\left(A\mathbf{x}_{k+1} + B\mathbf{y}_{k+1} -\mathbf{b} \right)$.
\ENDFOR
\end{algorithmic}
\end{algorithm}

\section{Main Results of Convergence Rates}\label{sec:analysis}
In this section, we will show that our Stochastic ADMM given in Alg.\ref{alg:stochastic_admm} exhibits a rate $O(1/\sqrt{t})$ of convergence in terms of both the objective value \emph{and} the feasibility violation: $\mathbb{E}[\theta(\bar{\mathbf{u}}_t)-\theta(\mathbf{u}_*)+\rho\|A\bar{\mathbf{x}}_t+B\bar{\mathbf{y}}_t-\mathbf{b}\|_2]=O(1/\sqrt{t})$. We extend the main result if more structural information of $\theta_1$ is available.

Before we address the main theorem on convergence rates, we first present an upper bound of the variation of the Lagrangian function and its first order approximation based on each iteration points.
\begin{lemma}\label{lemma:VI_bound}
 $\forall\mathbf{w}\in\mathcal{W}, k\geq 1$, we have
\begin{equation}\label{eq:VI_bound}
\begin{split}
&\theta_1(\mathbf{x}_{k}) + \theta_2(\mathbf{y}_{k+1}) - \theta(\mathbf{u}) + (\mathbf{w}_{k+1} - \mathbf{w})^T F(\mathbf{w}_{k+1}) 
\leq \frac{\eta_{k+1}\| \theta_1^{\prime}(\mathbf{x}_{k},\boldsymbol{\xi}_{k+1}) \|^2}{2}\\
&+\frac{1}{2\eta_{k+1}}\left( \|\mathbf{x}_k-\mathbf{x}\|^2- \|\mathbf{x}_{k+1}-\mathbf{x}\|^2 \right) 
+ \frac{\beta}{2} \left(\|A\mathbf{x}+B\mathbf{y}_k-\mathbf{b}\|^2-\|A\mathbf{x}+B\mathbf{y}_{k+1}-\mathbf{b}\|^2\right) \\
&+\left\langle \boldsymbol{\delta}_{k+1}, \mathbf{x}-\mathbf{x}_{k} \right\rangle + \frac{1}{2\beta} \left( \|\boldsymbol{\lambda}-\boldsymbol{\lambda}_k\|_2^2 - 
\|\boldsymbol{\lambda}-\boldsymbol{\lambda}_{k+1}\|_2^2  \right).
\end{split}
\end{equation}
\end{lemma}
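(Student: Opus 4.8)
The plan is to derive (\ref{eq:VI_bound}) from the first–order (variational) optimality conditions of the three updates in Alg.~\ref{alg:stochastic_admm}, combined with convexity of $\theta_1$ and $\theta_2$ and the elementary three-point identity $\langle a-b,\,c-a\rangle = \tfrac12\big(\|c-b\|^2 - \|a-b\|^2 - \|c-a\|^2\big)$. First I would record the optimality conditions. Writing $\mathbf{g}_k := \theta_1^{\prime}(\mathbf{x}_k,\boldsymbol{\xi}_{k+1})$, minimality of $\mathbf{x}_{k+1}$ in Line~1 gives, for every $\mathbf{x}\in\mathcal{X}$,
\[
\langle \mathbf{g}_k + \beta A^T(A\mathbf{x}_{k+1}+B\mathbf{y}_k-\mathbf{b}) - A^T\boldsymbol{\lambda}_k + \tfrac{1}{\eta_{k+1}}(\mathbf{x}_{k+1}-\mathbf{x}_k),\ \mathbf{x}-\mathbf{x}_{k+1}\rangle \geq 0,
\]
while the subgradient minimality of $\mathbf{y}_{k+1}$ in Line~2, after eliminating $\boldsymbol{\lambda}_k$ through Line~3 in the form $\boldsymbol{\lambda}_{k+1} = \boldsymbol{\lambda}_k - \beta(A\mathbf{x}_{k+1}+B\mathbf{y}_{k+1}-\mathbf{b})$, yields $\langle \theta_2^{\prime}(\mathbf{y}_{k+1}) - B^T\boldsymbol{\lambda}_{k+1},\ \mathbf{y}-\mathbf{y}_{k+1}\rangle \geq 0$ for every $\mathbf{y}\in\mathcal{Y}$.

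Second, I would treat the two blocks separately. For the $\mathbf{y}$-block, convexity of $\theta_2$ gives $\theta_2(\mathbf{y}_{k+1})-\theta_2(\mathbf{y}) \leq \langle \theta_2^{\prime}(\mathbf{y}_{k+1}),\ \mathbf{y}_{k+1}-\mathbf{y}\rangle$, and inserting the Line~2 condition shows $\theta_2(\mathbf{y}_{k+1})-\theta_2(\mathbf{y}) - \langle B^T\boldsymbol{\lambda}_{k+1},\ \mathbf{y}_{k+1}-\mathbf{y}\rangle \leq 0$, which exactly cancels the $\mathbf{y}$-component and part of the $\boldsymbol{\lambda}$-component of $(\mathbf{w}_{k+1}-\mathbf{w})^TF(\mathbf{w}_{k+1})$. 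For the $\mathbf{x}$-block, convexity of $\theta_1$ at $\mathbf{x}_k$ gives $\theta_1(\mathbf{x}_k)-\theta_1(\mathbf{x}) \leq \langle \theta_1^{\prime}(\mathbf{x}_k),\ \mathbf{x}_k-\mathbf{x}\rangle$; substituting $\theta_1^{\prime}(\mathbf{x}_k) = \mathbf{g}_k - \boldsymbol{\delta}_{k+1}$ produces the term $\langle \boldsymbol{\delta}_{k+1},\ \mathbf{x}-\mathbf{x}_k\rangle$ verbatim and leaves $\langle \mathbf{g}_k,\ \mathbf{x}_k-\mathbf{x}\rangle$. I would split the latter as $\langle \mathbf{g}_k,\ \mathbf{x}_k-\mathbf{x}_{k+1}\rangle + \langle \mathbf{g}_k,\ \mathbf{x}_{k+1}-\mathbf{x}\rangle$, bound the first by Young's inequality to get $\tfrac{\eta_{k+1}}{2}\|\mathbf{g}_k\|^2 + \tfrac{1}{2\eta_{k+1}}\|\mathbf{x}_{k+1}-\mathbf{x}_k\|^2$ (the first summand is the desired $\tfrac{\eta_{k+1}}{2}\|\theta_1^{\prime}(\mathbf{x}_k,\boldsymbol{\xi}_{k+1})\|^2$ term), and bound the second using the Line~1 condition.

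Third, I would assemble everything. The three-point identity applied to $\tfrac{1}{\eta_{k+1}}\langle \mathbf{x}_{k+1}-\mathbf{x}_k,\ \mathbf{x}-\mathbf{x}_{k+1}\rangle$ produces the telescoping $\tfrac{1}{2\eta_{k+1}}(\|\mathbf{x}_k-\mathbf{x}\|^2-\|\mathbf{x}_{k+1}-\mathbf{x}\|^2)$ plus a $-\tfrac{1}{2\eta_{k+1}}\|\mathbf{x}_{k+1}-\mathbf{x}_k\|^2$ that annihilates the Young residual. The remaining cross terms carrying the multipliers and $A$ collapse, after one more use of Line~3, to the single inner product $\beta\langle A\mathbf{x}-A\mathbf{x}_{k+1},\ B\mathbf{y}_k-B\mathbf{y}_{k+1}\rangle$, which a difference-of-squares rewriting turns into the target $\tfrac{\beta}{2}(\|A\mathbf{x}+B\mathbf{y}_k-\mathbf{b}\|^2-\|A\mathbf{x}+B\mathbf{y}_{k+1}-\mathbf{b}\|^2)$ minus $\tfrac{\beta}{2}(\|A\mathbf{x}_{k+1}+B\mathbf{y}_k-\mathbf{b}\|^2-\|A\mathbf{x}_{k+1}+B\mathbf{y}_{k+1}-\mathbf{b}\|^2)$; finally the three-point identity applied to $\tfrac{1}{\beta}\langle \boldsymbol{\lambda}_{k+1}-\boldsymbol{\lambda},\ \boldsymbol{\lambda}_k-\boldsymbol{\lambda}_{k+1}\rangle$ supplies the telescoping $\tfrac{1}{2\beta}(\|\boldsymbol{\lambda}-\boldsymbol{\lambda}_k\|^2-\|\boldsymbol{\lambda}-\boldsymbol{\lambda}_{k+1}\|^2)$ together with $-\tfrac{1}{2\beta}\|\boldsymbol{\lambda}_k-\boldsymbol{\lambda}_{k+1}\|^2$.

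I expect the main obstacle to be the bookkeeping in this last assembly: the mismatch between $A\mathbf{x}$ and $A\mathbf{x}_{k+1}$ in the cross terms is precisely what generates the quadratic telescoping, and one must check that the two genuinely troublesome leftovers cancel rather than accumulate. Using $A\mathbf{x}_{k+1}+B\mathbf{y}_{k+1}-\mathbf{b} = (\boldsymbol{\lambda}_k-\boldsymbol{\lambda}_{k+1})/\beta$, the stray $+\tfrac{\beta}{2}\|A\mathbf{x}_{k+1}+B\mathbf{y}_{k+1}-\mathbf{b}\|^2$ equals $+\tfrac{1}{2\beta}\|\boldsymbol{\lambda}_k-\boldsymbol{\lambda}_{k+1}\|^2$ and exactly cancels the negative multiplier-difference term, while $-\tfrac{\beta}{2}\|A\mathbf{x}_{k+1}+B\mathbf{y}_k-\mathbf{b}\|^2 \leq 0$ is simply dropped. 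Keeping the signs and the $\mathbf{x}$ versus $\mathbf{x}_{k+1}$ arguments straight throughout is the only real difficulty; everything else is convexity plus the three-point identity.
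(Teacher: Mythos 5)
Your proposal is correct and follows essentially the same route as the paper's proof: convexity of $\theta_1$ with the $\boldsymbol{\delta}_{k+1}$-split, the Line-1 optimality condition combined with the Euclidean three-point identity (which is exactly the paper's Lemma \ref{lemma:three_point} specialized to $D(\mathbf{u},\mathbf{v})=\tfrac12\|\mathbf{v}-\mathbf{u}\|^2$), Young's inequality on $\langle \theta_1^{\prime}(\mathbf{x}_k,\boldsymbol{\xi}_{k+1}), \mathbf{x}_k-\mathbf{x}_{k+1}\rangle$, the difference-of-squares treatment of $\beta\langle A\mathbf{x}-A\mathbf{x}_{k+1}, B\mathbf{y}_k-B\mathbf{y}_{k+1}\rangle$, the Line-2 optimality bound for the $\theta_2$ block, and the Line-3 three-point identity for $\boldsymbol{\lambda}$, with the two $\tfrac{1}{2\beta}\|\boldsymbol{\lambda}_{k+1}-\boldsymbol{\lambda}_k\|^2$ terms cancelling exactly as you describe. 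No gaps; your closing bookkeeping of the stray quadratic terms matches the paper's inequalities (\ref{two_term_1})--(\ref{eq:ineq_lambda}) verbatim.
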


Utilizing this lemma we are able to obtain our main result shown as below. We present our main theorem of the convergence in two fashions, both in terms of expectation and probability satisfaction. 
\begin{theorem}\label{theorem:stocadmm_rate}
Let $\eta_k = \frac{D_{\mathcal{X}}}{M\sqrt{2k}},\forall k\geq 1$ and $\rho>0$.
\begin{enumerate}[(i)]
\item Under Assumption \ref{weak assumption}, we have $\forall t\geq 1$,
\begin{equation}\label{eq:expectation_bound}
\mathbb{E}[\theta(\bar{\mathbf{u}}_t)-\theta(\mathbf{u}_*)+\rho\|A\bar{\mathbf{x}}_t+B\bar{\mathbf{y}}_t-\mathbf{b}\|]\leq 
 M_1(t)+M_2(t)\equiv\frac{\sqrt{2}D_{\mathcal{X}}M}{\sqrt{t}}+\frac{\beta D_{\mathbf{y}_*,B}^2 +\rho^2/\beta}{2t},
\end{equation}
\item Under Assumption \ref{weak assumption} and \ref{strong assumption}, we have for any $\Omega>0$,
\begin{equation}\label{eq: probability_bound}
\Pr\left\{ \theta(\bar{\mathbf{u}}_t)-\theta(\mathbf{u}_*)+\rho\|A\bar{\mathbf{x}}_t+B\bar{\mathbf{y}}_t-\mathbf{b}\|>\left(1+\frac{1}{2}\Omega+2\sqrt{2\Omega}\right)M_1(t)+M_2(t)\right\}\leq 2\exp\{-\Omega\},
\end{equation}
\end{enumerate}
\end{theorem}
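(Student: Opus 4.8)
The plan is to derive both bounds from Lemma~\ref{lemma:VI_bound} by summing it over the iterations, telescoping the deterministic pieces, and then invoking convexity to pass to the averaged iterates. Concretely, I would apply \eqref{eq:VI_bound} with a fixed comparison point $\mathbf{w}=(\mathbf{x}_*^T,\mathbf{y}_*^T,\boldsymbol{\lambda}^T)^T$ and sum the per-iteration bound over the first $t$ iterations that produce $\bar{\mathbf{u}}_t$. The last two bracketed differences telescope cleanly: the $\beta$-term collapses to $\tfrac{\beta}{2}D_{\mathbf{y}_*,B}^2$ (using $A\mathbf{x}_*+B\mathbf{y}_*=\mathbf{b}$ so that $A\mathbf{x}_*+B\mathbf{y}_k-\mathbf{b}=B(\mathbf{y}_k-\mathbf{y}_*)$), and the $\boldsymbol{\lambda}$-term collapses to at most $\tfrac{1}{2\beta}\|\boldsymbol{\lambda}\|^2$ since $\boldsymbol{\lambda}_0=0$. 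The prox term $\tfrac{1}{2\eta_{k+1}}(\|\mathbf{x}_k-\mathbf{x}_*\|^2-\|\mathbf{x}_{k+1}-\mathbf{x}_*\|^2)$ does \emph{not} telescope because $\eta_{k+1}$ varies, so here I would use Abel summation together with $\|\mathbf{x}_k-\mathbf{x}_*\|\le D_{\mathcal{X}}$ and the monotonicity of $1/\eta_k$ to bound it by $D_{\mathcal{X}}^2/(2\eta_{t+1})$; with $\eta_k=D_{\mathcal{X}}/(M\sqrt{2k})$ this contributes a term of order $D_{\mathcal{X}}M\sqrt{t}$.

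The next ingredient is the monotone operator $F$, whose linear part is skew-symmetric, so that $(\mathbf{w}_{k+1}-\mathbf{w})^TF(\mathbf{w}_{k+1})=(\mathbf{w}_{k+1}-\mathbf{w})^TF(\mathbf{w})$, which is affine in $\mathbf{w}_{k+1}$. Summing and dividing by $t$ therefore reduces the VI contribution to $(\bar{\mathbf{w}}_t-\mathbf{w})^TF(\mathbf{w})$, and with $\mathbf{x}=\mathbf{x}_*,\mathbf{y}=\mathbf{y}_*$ feasible this equals $-\boldsymbol{\lambda}^T(A\bar{\mathbf{x}}_t+B\bar{\mathbf{y}}_t-\mathbf{b})$. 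I would then choose $\boldsymbol{\lambda}=-\rho\,(A\bar{\mathbf{x}}_t+B\bar{\mathbf{y}}_t-\mathbf{b})/\|A\bar{\mathbf{x}}_t+B\bar{\mathbf{y}}_t-\mathbf{b}\|$, a vector of norm exactly $\rho$, so that this contribution becomes the feasibility-violation term $\rho\|A\bar{\mathbf{x}}_t+B\bar{\mathbf{y}}_t-\mathbf{b}\|$ while the $\boldsymbol{\lambda}$-telescope contributes exactly $\rho^2/(2\beta)$. Since \eqref{eq:VI_bound} holds pathwise for every $\mathbf{w}\in\mathcal{W}$, this data-dependent choice of $\boldsymbol{\lambda}$ is legitimate. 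Finally, applying Jensen's inequality to the convex $\theta_1,\theta_2$ lower-bounds $\tfrac1t\sum_k[\theta_1(\mathbf{x}_k)+\theta_2(\mathbf{y}_{k+1})]$ by $\theta(\bar{\mathbf{u}}_t)$, matching the averages in the definition of $\bar{\mathbf{u}}_t$. Collecting terms leaves the deterministic skeleton $M_1(t)+M_2(t)$ plus the two stochastic residuals $\tfrac1t\sum_k\tfrac{\eta_{k+1}}{2}\|\theta_1'(\mathbf{x}_k,\boldsymbol{\xi}_{k+1})\|^2$ and $\tfrac1t\sum_k\langle\boldsymbol{\delta}_{k+1},\mathbf{x}_*-\mathbf{x}_k\rangle$.

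For part (i) I would take expectations. The cross term vanishes: conditioning on the history $\mathcal{F}_k=\sigma(\boldsymbol{\xi}_1,\dots,\boldsymbol{\xi}_k)$, the increment $\boldsymbol{\delta}_{k+1}=\theta_1'(\mathbf{x}_k,\boldsymbol{\xi}_{k+1})-\theta_1'(\mathbf{x}_k)$ is conditionally mean-zero and $\mathbf{x}_*-\mathbf{x}_k$ is $\mathcal{F}_k$-measurable, so $\E\langle\boldsymbol{\delta}_{k+1},\mathbf{x}_*-\mathbf{x}_k\rangle=0$. For the gradient-norm residual, Assumption~\ref{weak assumption} gives $\E\|\theta_1'(\mathbf{x}_k,\boldsymbol{\xi}_{k+1})\|^2\le M^2$, so its expectation is at most $\tfrac{M^2}{2t}\sum_k\eta_{k+1}$, of order $D_{\mathcal{X}}M/\sqrt t$; together with the Abel-summation prox term it sums to exactly $M_1(t)$, while the telescoped pieces give $M_2(t)$.

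Part (ii) is the real obstacle, and is where Assumption~\ref{strong assumption} enters, since now both stochastic residuals must be shown to concentrate. The martingale $\sum_k\langle\boldsymbol{\delta}_{k+1},\mathbf{x}_*-\mathbf{x}_k\rangle$ has conditionally sub-Gaussian increments of scale $\|\boldsymbol{\delta}_{k+1}\|D_{\mathcal{X}}$, and an Azuma/Bernstein-type martingale tail under the light-tail condition produces a deviation of order $\sqrt{\Omega}\,M_1(t)$, the source of the $2\sqrt{2\Omega}$ coefficient. The sum $\sum_k\eta_{k+1}\|\theta_1'(\mathbf{x}_k,\boldsymbol{\xi}_{k+1})\|^2$ is a weighted sum of sub-exponential variables, since Assumption~\ref{strong assumption} controls the moment generating function of $\|\theta_1'\|^2/M^2$; bounding its upper tail via the exponential Markov inequality yields the linear-in-$\Omega$ term $\tfrac12\Omega\,M_1(t)$. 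A union bound over the two events, each at level $\exp\{-\Omega\}$, gives the prefactor $2\exp\{-\Omega\}$. These are precisely the maximal-inequality arguments of stochastic mirror descent \cite{nemirovski09rsaasp}; the care lies in carrying the varying stepsizes and the ADMM-specific weights through them while keeping the constants aligned with $M_1(t)$.
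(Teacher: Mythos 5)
Your proposal is correct and follows essentially the same route as the paper's own proof: summing Lemma~\ref{lemma:VI_bound} at $(\mathbf{x}_*,\mathbf{y}_*)$, Abel-summing the variable-stepsize prox terms, exploiting the skew-symmetry of $F$, maximizing over the multiplier ball of radius $\rho$ (your explicit data-dependent maximizer is exactly the paper's $\max_{\|\boldsymbol{\lambda}\|\le\rho}$ step), killing the cross term by the martingale property in part (i), and, for part (ii), pairing an exponential-Markov/Jensen bound on the weighted gradient-norm sum with a Nemirovski-style sub-Gaussian martingale tail for $\sum_k\langle\boldsymbol{\delta}_{k+1},\mathbf{x}_*-\mathbf{x}_k\rangle$, followed by a union bound with $\Omega_1=\Omega$, $\Omega_2=2\sqrt{\Omega}$. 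One small indexing correction: the Abel summation bounds the prox contribution by $D_{\mathcal{X}}^2/(2\eta_t)$ (the largest stepsize index occurring in the sum is $t$), not $D_{\mathcal{X}}^2/(2\eta_{t+1})$, and with your index the total would slightly overshoot the exact constant $M_1(t)$.
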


\begin{remark}
Adapting our proof techniques to the deterministic case where no noise takes place, we are able to obtain a similar result for deterministic ADMM:
\begin{equation}
\forall \rho >0, t\geq 1,\ \ \theta(\bar{\mathbf{u}}_t)-\theta(\mathbf{u}_*)+\rho\|A\bar{\mathbf{x}}_t+B\bar{\mathbf{y}}_t-\mathbf{b}\|_2\leq 
\frac{\beta D_{\mathbf{y}_*,B}^2}{2t} + \frac{\rho^2}{2\beta t},
\end{equation}
\end{remark}
While resulting in a $O(1/t)$ convergence rate same as the existing literature \cite{he12ocrdradm,he12oncrdradmm,wang12oadm}, the above finding is actually a  significant advance in the theoretical aspects of ADMM. For the first time, the convergence of ADMM is proved in terms of objective value and feasibility violation. By contrast, the existing literature \cite{he12ocrdradm,he12oncrdradmm,wang12oadm} only shows the convergence of ADMM in terms of the satisfaction of variational inequalities, which is not a direct measure of how fast an algorithm reaches the optimal solution.

\subsection{Extension: Strongly Convex $\theta_1$}
When function $\theta_1(\cdot)$ is strongly convex, the convergence rate of Stochastic ADMM can be improved to $O\left(\frac{\log t}{t}\right)$.
\begin{theorem}
When $\theta_1$ is $\mu$-strongly convex with respect to $\|\cdot\|$, taking $\eta_k = \frac{1}{k\mu }$ in Alg.\ref{alg:stochastic_admm}, under Assumption \ref{weak assumption}, $\forall \rho>0, t\geq 1$ we have $\mathbb{E}\left[\theta(\bar{\mathbf{u}}_t)-\theta(\mathbf{u}_*)+\rho\|A\bar{\mathbf{x}}_t+B\bar{\mathbf{y}}_t-\mathbf{b}\|_2\right] \leq \frac{M^2\log t}{\mu t} + \frac{\mu D_{\mathcal{X}}^2}{2t} + \frac{\beta D_{\mathbf{y}_*,B}^2}{2t}  + \frac{\rho^2}{2\beta t}$.
\end{theorem}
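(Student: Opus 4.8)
The plan is to reuse the entire scaffolding of the proof of Theorem~\ref{theorem:stocadmm_rate}, altering only the two places where the strong-convexity hypothesis and the new stepsize enter. First I would upgrade Lemma~\ref{lemma:VI_bound}: the sole place the structure of $\theta_1$ is invoked in its derivation is the subgradient inequality $\theta_1(\mathbf{x}_k)-\theta_1(\mathbf{x})\le\langle\theta_1^{\prime}(\mathbf{x}_k),\mathbf{x}_k-\mathbf{x}\rangle$, and replacing it by the $\mu$-strong-convexity version $\theta_1(\mathbf{x}_k)-\theta_1(\mathbf{x})\le\langle\theta_1^{\prime}(\mathbf{x}_k),\mathbf{x}_k-\mathbf{x}\rangle-\frac{\mu}{2}\|\mathbf{x}_k-\mathbf{x}\|^2$ reproduces (\ref{eq:VI_bound}) verbatim except for an extra $-\frac{\mu}{2}\|\mathbf{x}_k-\mathbf{x}\|^2$ on the right-hand side.

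Next I would set $\mathbf{x}=\mathbf{x}_*$, $\mathbf{y}=\mathbf{y}_*$, leave $\boldsymbol{\lambda}$ free, and sum the strengthened bound over the iterations. The crux is that with $\eta_{k+1}=\frac{1}{(k+1)\mu}$ one has $\frac{1}{2\eta_{k+1}}=\frac{(k+1)\mu}{2}$, so the prox weight of $\|\mathbf{x}_k-\mathbf{x}_*\|^2$ combines with the new strong-convexity term into $\frac{(k+1)\mu}{2}-\frac{\mu}{2}=\frac{k\mu}{2}$, which is exactly matched by the weight $\frac{(k+1)\mu}{2}$ of $-\|\mathbf{x}_{k+1}-\mathbf{x}_*\|^2$ one index later. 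Hence the whole $\mathbf{x}$-block telescopes, leaving only a single boundary term bounded by $\frac{\mu}{2}D_{\mathcal{X}}^2$ and a nonpositive tail $-\frac{t\mu}{2}\|\mathbf{x}_t-\mathbf{x}_*\|^2$; this is the origin of the $\frac{\mu D_{\mathcal{X}}^2}{2t}$ summand. Exactly as in the convex case, the $\beta$-block telescopes to $\frac{\beta}{2}\|B(\mathbf{y}_0-\mathbf{y}_*)\|^2=\frac{\beta}{2}D_{\mathbf{y}_*,B}^2$ and the $\boldsymbol{\lambda}$-block (using $\boldsymbol{\lambda}_0=\mathbf{0}$) to $\frac{1}{2\beta}\|\boldsymbol{\lambda}\|^2$. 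The only genuinely new estimate is the stochastic term $\sum_k\frac{\eta_{k+1}}{2}\|\theta_1^{\prime}(\mathbf{x}_k,\boldsymbol{\xi}_{k+1})\|^2$: taking conditional expectations and invoking Assumption~\ref{weak assumption} bounds it by $\frac{M^2}{2\mu}\sum_k\frac{1}{k+1}$, and this harmonic sum is precisely what produces the $\frac{M^2\log t}{\mu t}$ factor.

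It remains to turn the summed, $\mathbf{w}$-indexed inequality into a bound on the objective gap plus feasibility. As in Theorem~\ref{theorem:stocadmm_rate}, I would use that the affine operator $F$ has a skew-symmetric linear part, so $(\mathbf{w}_{k+1}-\mathbf{w})^TF(\mathbf{w}_{k+1})=(\mathbf{w}_{k+1}-\mathbf{w})^TF(\mathbf{w})$; substituting $\mathbf{x}=\mathbf{x}_*,\mathbf{y}=\mathbf{y}_*$ and using $A\mathbf{x}_*+B\mathbf{y}_*=\mathbf{b}$ collapses the accumulated inner products to $-t\,\boldsymbol{\lambda}^T(A\bar{\mathbf{x}}_t+B\bar{\mathbf{y}}_t-\mathbf{b})$, while Jensen's inequality applied to the convex $\theta_1,\theta_2$ lower-bounds $\frac{1}{t}\sum_k[\theta_1(\mathbf{x}_k)+\theta_2(\mathbf{y}_{k+1})]$ by $\theta(\bar{\mathbf{u}}_t)$. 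To surface the feasibility penalty I would then pick $\boldsymbol{\lambda}=-\rho\,(A\bar{\mathbf{x}}_t+B\bar{\mathbf{y}}_t-\mathbf{b})/\|A\bar{\mathbf{x}}_t+B\bar{\mathbf{y}}_t-\mathbf{b}\|$ (the feasible case being trivial), which converts $-\boldsymbol{\lambda}^T(\cdot)$ into $\rho\|A\bar{\mathbf{x}}_t+B\bar{\mathbf{y}}_t-\mathbf{b}\|$ at the cost of only $\frac{\rho^2}{2\beta}$ from the $\boldsymbol{\lambda}$-block; since the summed inequality holds pointwise for every $\boldsymbol{\lambda}$ and every sample path, this data-dependent choice is legitimate. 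Finally I take expectations --- the cross term $\sum_k\langle\boldsymbol{\delta}_{k+1},\mathbf{x}_*-\mathbf{x}_k\rangle$ vanishes because $\boldsymbol{\delta}_{k+1}$ is conditionally mean-zero given the history and $\mathbf{x}_k$ is measurable with respect to it --- and divide by $t$ to obtain the four claimed terms.

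The main obstacle is the telescoping step: one must check that the strengthened prox coefficient created by strong convexity lines up \emph{exactly} with the $1/(k\mu)$ schedule, so that the a priori growing sum $\sum_k\frac{(k+1)\mu}{2}\|\mathbf{x}_k-\mathbf{x}_*\|^2$ collapses rather than accumulating an uncontrolled $O(t)$ residual; any mismatch between the stepsize and $\mu$ would break the $O(\log t/t)$ rate. A secondary subtlety is the order of the probabilistic bookkeeping: the martingale argument and the Assumption~\ref{weak assumption} bound for the noise must be carried out before, or independently of, the data-dependent choice of $\boldsymbol{\lambda}$, and the harmonic sum must be bounded with enough care to pin down the constant multiplying $\frac{M^2\log t}{\mu t}$.
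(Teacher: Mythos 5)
Your proposal is correct and takes essentially the same route as the paper's proof: strengthen the subgradient inequality to its $\mu$-strongly convex form inside the Lemma~\ref{lemma:VI_bound} derivation, use $\eta_k = \frac{1}{k\mu}$ so that the coefficient $\frac{1}{2\eta_{k+1}}-\frac{\mu}{2} = \frac{k\mu}{2}$ makes the prox terms telescope, bound the noise term via Assumption~\ref{weak assumption} by the harmonic sum $\frac{M^2}{2\mu t}\sum_{k=1}^t \frac{1}{k}$, and recover the feasibility term through the $\boldsymbol{\lambda}$-ball argument (your explicit maximizing choice of $\boldsymbol{\lambda}$ is exactly the paper's $\max_{\boldsymbol{\lambda}\in\mathcal{B}_0}$). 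The only cosmetic difference is that the telescoping is in fact exact --- the $k=0$ term carries weight zero, so the sum collapses to the nonpositive tail $-\frac{t\mu}{2}\|\mathbf{x}_t-\mathbf{x}_*\|^2$ with no boundary term --- meaning your (and the paper's) $\frac{\mu D_{\mathcal{X}}^2}{2t}$ summand is harmless slack rather than a necessary contribution.
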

\subsection{Extension: Lipschitz Smooth $\theta_1$}
Since the bounds given in Theorem \ref{theorem:stocadmm_rate} are related to the magnitude of subgradients, they do not provide any intuition of the performance in low-noise scenarios. With a Lipschitz smooth function $\theta_1$, we are able to obtain convergence rates in terms of the variations of gradients, as stated in Assumption \ref{sigma_assumption}. Besides, under this assumption we are able to replace the unusual definition of $\bar{\mathbf{u}}_k$ in (\ref{eq:notations}) with
\begin{equation}\label{eq:def_u_new}
\bar{\mathbf{u}}_k \equiv \begin{pmatrix}
\frac{1}{k}\sum_{i=1}^{k}\mathbf{x}_i \\
\frac{1}{k}\sum_{i=1}^{k}\mathbf{y}_i \end{pmatrix}.
\end{equation}
\begin{theorem}
When $\theta_1(\cdot)$ is $L$-Lipschitz smooth with respect to $\|\cdot\|$, taking $\eta_k = \frac{1}{L+\sigma\sqrt{2k}/D_{\mathcal{X}}}$ in Alg.\ref{alg:stochastic_admm}, under Assumption \ref{sigma_assumption}, $\forall \rho > 0, t\geq 1$ we have $\mathbb{E}\left[\theta(\bar{\mathbf{u}}_t)-\theta(\mathbf{u}_*)+\rho\|A\bar{\mathbf{x}}_t+B\bar{\mathbf{y}}_t-\mathbf{b}\|_2\right] \leq  \frac{\sqrt{2}D_{\mathcal{X}}\sigma}{\sqrt{t}} +
\frac{LD^2_{\mathcal{X}}}{2t}+ \frac{\beta D_{\mathbf{y}_*,B}^2}{2t}  + \frac{\rho^2}{2\beta t}$.
\end{theorem}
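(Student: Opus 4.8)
The plan is to follow the same template as the proof of Theorem \ref{theorem:stocadmm_rate}, but to sharpen the single point at which the stochastic subgradient is peeled off, so that the per-iteration noise is charged at the rate $\sigma$ (via Assumption \ref{sigma_assumption}) rather than at the rate $M$. The inequality (\ref{eq:VI_bound}) of Lemma \ref{lemma:VI_bound} is not directly usable here: it carries $\theta_1(\mathbf{x}_{k})$ on its left (which forces the shifted average, not (\ref{eq:def_u_new})) and the term $\tfrac{\eta_{k+1}}{2}\|\theta_1^{\prime}(\mathbf{x}_{k},\boldsymbol{\xi}_{k+1})\|^2$ on its right, and under Assumption \ref{sigma_assumption} alone $\|\theta_1^{\prime}(\mathbf{x}_{k},\boldsymbol{\xi}_{k+1})\|^2$ need not be $O(\sigma^2)$. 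So I would retrace the derivation of Lemma \ref{lemma:VI_bound} and modify it at two places.

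First, I would promote $\theta_1(\mathbf{x}_{k})$ to $\theta_1(\mathbf{x}_{k+1})$: combining the $L$-smoothness descent inequality $\theta_1(\mathbf{x}_{k+1})\le\theta_1(\mathbf{x}_{k})+\langle\nabla\theta_1(\mathbf{x}_{k}),\mathbf{x}_{k+1}-\mathbf{x}_{k}\rangle+\tfrac{L}{2}\|\mathbf{x}_{k+1}-\mathbf{x}_{k}\|^2$ with the convexity bound on $\theta_1(\mathbf{x}_{k})-\theta_1(\mathbf{x})$ gives $\theta_1(\mathbf{x}_{k+1})-\theta_1(\mathbf{x})\le\langle\nabla\theta_1(\mathbf{x}_{k}),\mathbf{x}_{k+1}-\mathbf{x}\rangle+\tfrac{L}{2}\|\mathbf{x}_{k+1}-\mathbf{x}_{k}\|^2$. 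Second, writing $\nabla\theta_1(\mathbf{x}_{k})=\theta_1^{\prime}(\mathbf{x}_{k},\boldsymbol{\xi}_{k+1})-\boldsymbol{\delta}_{k+1}$, the deterministic part $\langle\theta_1^{\prime}(\mathbf{x}_{k},\boldsymbol{\xi}_{k+1}),\mathbf{x}_{k+1}-\mathbf{x}\rangle$ is still controlled through the optimality condition of Line 1 of Alg. \ref{alg:stochastic_admm}, which produces the telescoping prox and feasibility terms together with a cross-term $-\tfrac{1}{2\eta_{k+1}}\|\mathbf{x}_{k+1}-\mathbf{x}_{k}\|^2$. Crucially, I would retain this quadratic term rather than cancel it against a Young estimate of the full subgradient.

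The heart of the argument is the bookkeeping of that quadratic term. After the two modifications, the coefficient of $\|\mathbf{x}_{k+1}-\mathbf{x}_{k}\|^2$ is $\tfrac{L}{2}-\tfrac{1}{2\eta_{k+1}}\le 0$, and there remains the stochastic inner product $-\langle\boldsymbol{\delta}_{k+1},\mathbf{x}_{k+1}-\mathbf{x}\rangle$. I would split it as $-\langle\boldsymbol{\delta}_{k+1},\mathbf{x}_{k}-\mathbf{x}\rangle-\langle\boldsymbol{\delta}_{k+1},\mathbf{x}_{k+1}-\mathbf{x}_{k}\rangle$; with $\mathbf{x}=\mathbf{x}_*$ the first piece is a martingale difference (its conditional expectation is zero since $\mathbf{x}_{k}$ is measurable w.r.t. $\boldsymbol{\xi}_{1:k}$ and $\mathbb{E}[\boldsymbol{\delta}_{k+1}\mid\boldsymbol{\xi}_{1:k}]=0$), and the second is bounded by Young's inequality with parameter $\eta_{k+1}^{\prime}=D_{\mathcal{X}}/(\sigma\sqrt{2(k+1)})$, yielding $\tfrac{\eta_{k+1}^{\prime}}{2}\|\boldsymbol{\delta}_{k+1}\|^2+\tfrac{1}{2\eta_{k+1}^{\prime}}\|\mathbf{x}_{k+1}-\mathbf{x}_{k}\|^2$. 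The stepsize $\eta_k=1/(L+\sigma\sqrt{2k}/D_{\mathcal{X}})$ is chosen precisely so that $1/\eta_{k+1}^{\prime}=\tfrac{1}{\eta_{k+1}}-L$, whence the positive $\tfrac{1}{2\eta_{k+1}^{\prime}}\|\mathbf{x}_{k+1}-\mathbf{x}_{k}\|^2$ exactly offsets the negative $\big(\tfrac{L}{2}-\tfrac{1}{2\eta_{k+1}}\big)\|\mathbf{x}_{k+1}-\mathbf{x}_{k}\|^2$, leaving only $\tfrac{\eta_{k+1}^{\prime}}{2}\|\boldsymbol{\delta}_{k+1}\|^2$, whose conditional expectation is at most $\tfrac{D_{\mathcal{X}}\sigma}{2\sqrt{2(k+1)}}$ by Assumption \ref{sigma_assumption}. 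This exact cancellation, which is what keeps the noise charge at order $\sigma$ rather than $M$, is the step I expect to be the main obstacle.

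Finally I would sum the per-iteration inequality over $k=0,\dots,t-1$ and take expectations. The martingale terms vanish; $\sum_{k=0}^{t-1}\tfrac{\eta_{k+1}^{\prime}}{2}\mathbb{E}\|\boldsymbol{\delta}_{k+1}\|^2\le\tfrac{D_{\mathcal{X}}\sigma}{2\sqrt{2}}\sum_{j=1}^{t}j^{-1/2}\le D_{\mathcal{X}}\sigma\sqrt{t}/\sqrt{2}$; an Abel summation of the time-varying prox telescope $\sum_{k}\tfrac{1}{2\eta_{k+1}}(\|\mathbf{x}_{k}-\mathbf{x}_*\|^2-\|\mathbf{x}_{k+1}-\mathbf{x}_*\|^2)$, using $\|\mathbf{x}_{k}-\mathbf{x}_*\|\le D_{\mathcal{X}}$ and the monotonicity of $1/\eta_{k+1}$, is bounded by $D_{\mathcal{X}}^2/(2\eta_t)=\tfrac{LD_{\mathcal{X}}^2}{2}+D_{\mathcal{X}}\sigma\sqrt{t}/\sqrt{2}$; the $\beta$-feasibility telescope (at $\mathbf{x}=\mathbf{x}_*$) collapses to $\tfrac{\beta}{2}\|B(\mathbf{y}_0-\mathbf{y}_*)\|^2=\tfrac{\beta}{2}D_{\mathbf{y}_*,B}^2$; and the skew-symmetry of $F$ turns $\tfrac1t\sum_{k}(\mathbf{w}_{k+1}-\mathbf{w})^T F(\mathbf{w}_{k+1})$ (at $\mathbf{x}=\mathbf{x}_*,\mathbf{y}=\mathbf{y}_*$) into $-\boldsymbol{\lambda}^T(A\bar{\mathbf{x}}_t+B\bar{\mathbf{y}}_t-\mathbf{b})$ plus the $\boldsymbol{\lambda}$-telescope $\tfrac{1}{2\beta}\|\boldsymbol{\lambda}\|^2$. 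Dividing by $t$, applying Jensen to pass from $\tfrac1t\sum_{i=1}^{t}\theta(\mathbf{u}_{i})$ to $\theta(\bar{\mathbf{u}}_t)$ with $\bar{\mathbf{u}}_t$ as in (\ref{eq:def_u_new}), and taking the supremum over $\|\boldsymbol{\lambda}\|\le\rho$ (the maximizer $\boldsymbol{\lambda}=-\rho(A\bar{\mathbf{x}}_t+B\bar{\mathbf{y}}_t-\mathbf{b})/\|A\bar{\mathbf{x}}_t+B\bar{\mathbf{y}}_t-\mathbf{b}\|$) converts $-\boldsymbol{\lambda}^T(\cdot)$ into $\rho\|A\bar{\mathbf{x}}_t+B\bar{\mathbf{y}}_t-\mathbf{b}\|$ at the cost $\rho^2/(2\beta t)$. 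Adding the two $O(1/\sqrt{t})$ contributions gives $\sqrt{2}D_{\mathcal{X}}\sigma/\sqrt{t}$, and assembling the remaining $O(1/t)$ terms yields the announced bound.
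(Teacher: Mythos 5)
Your proposal is correct and follows essentially the same route as the paper's own proof: the paper likewise re-derives Lemma \ref{lemma:VI_bound} with $\theta_1(\mathbf{x}_{k+1})$ promoted via the $L$-smoothness descent inequality, retains the $-\frac{1}{2\eta_{k+1}}\|\mathbf{x}_{k+1}-\mathbf{x}_k\|^2$ term, splits $\left\langle \boldsymbol{\delta}_{k+1}, \mathbf{x}-\mathbf{x}_{k+1}\right\rangle$ and applies Young's inequality with parameter $1/\eta_{k+1}-L$ (your $1/\eta^{\prime}_{k+1}$ is exactly this quantity), so that the quadratic terms cancel and only $\frac{\|\boldsymbol{\delta}_{k+1}\|^2}{2(1/\eta_{k+1}-L)}$ survives, after which the argument of Theorem \ref{theorem:stocadmm_rate}(i) is repeated with the average (\ref{eq:def_u_new}). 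Your accounting of the resulting constants matches the stated bound.
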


\section{Summary and Future Work}
In this paper, we have proposed the stochastic setting for ADMM along with our stochastic ADMM algorithm. Based on a first-order approximation of the stochastic function, our algorithm is applicable to a very broad class of problems even with functions that have no closed-form solution to the subproblem of minimizing the augmented $\theta_1$. We have also established convergence rates under various structural assumptions of $\theta_1$: $O(1/\sqrt{t})$ for convex functions and $O(\log t/t)$ for strongly convex functions. We are working on integrating Nesterov's optimal first-order methods \cite{nesterov04ilco} to our algorithm, which will help in achieving optimal convergence rates. More interesting and challenging applications will be carried out in our future work.

\section{Appendix}
\subsection{3-Points Relation}
Before proving Lemma \ref{lemma:VI_bound}, we will start with the following simple lemma, which is a  very useful result by implementing Bregman divergence as a prox-function in proximal methods. 
\begin{lemma}\label{lemma:three_point}
Let $l(\mathbf{x}):\mathcal{X}\rightarrow \mathbb{R}$ be a convex differentiable function with gradient $\mathbf{g}$. Let scalar $s\geq 0$. For any vector $\mathbf{u}$ and $\mathbf{v}$, denote their Bregman divergence as $D(\mathbf{u},\mathbf{v}) \equiv \omega(\mathbf{u})-\omega(\mathbf{v}) - \langle \nabla\omega(\mathbf{v}),\mathbf{u}-\mathbf{v} \rangle$. If $\forall \mathbf{u}\in\mathcal{X}$,
\begin{equation}\label{eq:lm_prox_method}
\mathbf{x}^* \equiv \arg\min_{\mathbf{x}\in\mathcal{X}} l(\mathbf{x}) + sD(\mathbf{x},\mathbf{u}),
\end{equation}
then
\begin{equation*}
\left\langle \mathbf{g}(\mathbf{x}^*), \mathbf{x}^*-\mathbf{x} \right\rangle \leq s\left[D(\mathbf{x},\mathbf{u}) - D(\mathbf{x}, \mathbf{x}^*) - D(\mathbf{x}^*, \mathbf{u}) \right].
\end{equation*}
\end{lemma}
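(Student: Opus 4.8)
The plan is to invoke the first-order optimality condition for the constrained minimization that defines $\mathbf{x}^*$, and then rewrite the resulting variational inequality in the stated Bregman form using a three-point identity for $D$. First I would observe that the objective $l(\mathbf{x}) + sD(\mathbf{x},\mathbf{u})$ is convex and differentiable in $\mathbf{x}$ (here I use that $\omega$ is differentiable, so $\nabla_{\mathbf{x}} D(\mathbf{x},\mathbf{u}) = \nabla\omega(\mathbf{x}) - \nabla\omega(\mathbf{u})$), and that $\mathcal{X}$ is convex. Hence the minimizer $\mathbf{x}^*$ satisfies the standard variational inequality: for every $\mathbf{x}\in\mathcal{X}$,
\[
\left\langle \mathbf{g}(\mathbf{x}^*) + s\big(\nabla\omega(\mathbf{x}^*) - \nabla\omega(\mathbf{u})\big),\, \mathbf{x} - \mathbf{x}^* \right\rangle \geq 0 .
\]
Rearranging to isolate the target inner product gives
\[
\left\langle \mathbf{g}(\mathbf{x}^*),\, \mathbf{x}^* - \mathbf{x} \right\rangle \leq s\left\langle \nabla\omega(\mathbf{x}^*) - \nabla\omega(\mathbf{u}),\, \mathbf{x} - \mathbf{x}^* \right\rangle .
\]

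The remaining step, which is the only genuine computation, is to recognize the right-hand side as the claimed Bregman combination. I would expand each of $D(\mathbf{x},\mathbf{u})$, $D(\mathbf{x},\mathbf{x}^*)$, and $D(\mathbf{x}^*,\mathbf{u})$ directly from the definition and verify the identity
\[
D(\mathbf{x},\mathbf{u}) - D(\mathbf{x},\mathbf{x}^*) - D(\mathbf{x}^*,\mathbf{u}) = \left\langle \nabla\omega(\mathbf{x}^*) - \nabla\omega(\mathbf{u}),\, \mathbf{x} - \mathbf{x}^* \right\rangle .
\]
In this expansion the $\omega(\mathbf{x})$ terms cancel outright, the remaining function values $\omega(\mathbf{u})$ and $\omega(\mathbf{x}^*)$ cancel pairwise, and collecting the linear terms — grouping the two contributions carrying $\nabla\omega(\mathbf{u})$ and the single one carrying $\nabla\omega(\mathbf{x}^*)$ — leaves exactly $\langle \nabla\omega(\mathbf{x}^*) - \nabla\omega(\mathbf{u}),\, \mathbf{x}-\mathbf{x}^*\rangle$. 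Substituting this identity into the variational inequality above yields the stated conclusion.

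Since both ingredients are standard, I do not anticipate a real obstacle; the points needing care are only bookkeeping. One must ensure that the Bregman penalty $D(\cdot,\mathbf{u})$ is differentiable so that the variational inequality is legitimate (this rests on differentiability of $\omega$, already implicit in writing $\nabla\omega$), and one must track the orientation of the inequality carefully so that $\langle\mathbf{g}(\mathbf{x}^*),\mathbf{x}^*-\mathbf{x}\rangle$ — rather than its negative — ends up on the left. As a sanity check, the case $s=0$ collapses to the plain optimality inequality $\langle\mathbf{g}(\mathbf{x}^*),\mathbf{x}^*-\mathbf{x}\rangle\le 0$, consistent with the bound.
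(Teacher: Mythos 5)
Your proposal is correct and follows essentially the same route as the paper's own proof: invoke the first-order variational inequality at $\mathbf{x}^*$, write $\nabla_{\mathbf{x}} D(\mathbf{x},\mathbf{u})\big|_{\mathbf{x}=\mathbf{x}^*} = \nabla\omega(\mathbf{x}^*)-\nabla\omega(\mathbf{u})$, and convert the resulting inner product via the three-point Bregman identity. The only difference is cosmetic — you expand and verify the identity explicitly, whereas the paper states it in one line.
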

\begin{proof}
Invoking the optimality condition for (\ref{eq:lm_prox_method}), we have
\begin{equation*}
\left\langle \mathbf{g}(\mathbf{x}^*) + s\nabla D(\mathbf{x}^*, \mathbf{u}), \mathbf{x}-\mathbf{x}^* \right\rangle \geq 0,\  \forall \mathbf{x}\in\mathcal{X},
\end{equation*}
which is equivalent to
\begin{equation*}
\begin{split}
\left\langle \mathbf{g}(\mathbf{x}^*), \mathbf{x}^*-\mathbf{x} \right\rangle &\leq s \left\langle \nabla D(\mathbf{x}^*, \mathbf{u}), \mathbf{x}-\mathbf{x}^* \right\rangle\\
& = s \left\langle \nabla\omega(\mathbf{x}^*) -\nabla\omega(\mathbf{u}), \mathbf{x}-\mathbf{x}^*\right\rangle\\
& = s\left[D(\mathbf{x},\mathbf{u}) - D(\mathbf{x}, \mathbf{x}^*) - D(\mathbf{x}^*,\mathbf{u}) \right].
\end{split}
\end{equation*}
\end{proof}

\subsection {Proof of Lemma \ref{lemma:VI_bound}}
\begin{proof}
Due to the convexity of $\theta_1$ and using the definition of $\boldsymbol{\delta}_k$, we have
\begin{equation}\label{theta1_conv}
\theta_1(\mathbf{x}_{k}) - \theta_1(\mathbf{x})  \leq \left\langle \theta_1^{\prime}(\mathbf{x}_{k}), \mathbf{x}_{k} - \mathbf{x} \right\rangle 
= \left\langle \theta_1^{\prime}(\mathbf{x}_{k},\boldsymbol{\xi}_{k+1}), \mathbf{x}_{k+1} - \mathbf{x} \right\rangle +\left\langle \boldsymbol{\delta}_{k+1}, \mathbf{x}-\mathbf{x}_{k} \right\rangle
+\left\langle \theta_1^{\prime}(\mathbf{x}_{k},\boldsymbol{\xi}_{k+1}), \mathbf{x}_{k} - \mathbf{x}_{k+1} \right\rangle.
\end{equation}

Applying Lemma \ref{lemma:three_point} to Line 1 of Alg.\ref{alg:stochastic_admm} and taking $D(\mathbf{u},\mathbf{v})= \frac{1}{2}\|\mathbf{v}-\mathbf{u}\|^2$, we have
\begin{equation}\label{opt_cond_l1_derive}
\begin{split}
&\left\langle  \theta_1^{\prime}(\mathbf{x}_{k},\boldsymbol{\xi}_{k+1})  + A^T\left[\beta(A\mathbf{x}_{k+1}+B\mathbf{y}_k-\mathbf{b})-\boldsymbol{\lambda}_k \right],  \mathbf{x}_{k+1} -\mathbf{x}\right\rangle\\
&\leq \frac{1}{2\eta_{k+1}}\left(\|\mathbf{x}_k-\mathbf{x}\|^2- \|\mathbf{x}_{k+1}-\mathbf{x}\|^2 - \|\mathbf{x}_{k}-\mathbf{x}_{k+1}\|^2 \right)
\end{split}
\end{equation}

Combining (\ref{theta1_conv}) and (\ref{opt_cond_l1_derive}) we have
\begin{equation}\label{theta_1_int}
\begin{split}
&\ \ \ \ \theta_1(\mathbf{x}_{k}) - \theta_1(\mathbf{x}) + \left\langle \mathbf{x}_{k+1}-\mathbf{x}, -A^T \boldsymbol{\lambda}_{k+1}\right\rangle \\
&\overset{(\ref{theta1_conv})}{\leq} 
\left\langle \theta_1^{\prime}(\mathbf{x}_{k},\boldsymbol{\xi}_{k+1}), \mathbf{x}_{k+1} - \mathbf{x} \right\rangle +\left\langle \boldsymbol{\delta}_{k+1}, \mathbf{x}-\mathbf{x}_{k} \right\rangle
+\left\langle \theta_1^{\prime}(\mathbf{x}_{k},\boldsymbol{\xi}_{k+1}), \mathbf{x}_{k} - \mathbf{x}_{k+1} \right\rangle+\\
&\ \ \ \ \left\langle \mathbf{x}_{k+1}-\mathbf{x}, A^T \left[ \beta(A\mathbf{x}_{k+1}+B\mathbf{y}_{k+1}-\mathbf{b})-\boldsymbol{\lambda}_k \right]\right\rangle \\
&= \left\langle  \theta_1^{\prime}(\mathbf{x}_{k},\boldsymbol{\xi}_{k+1})  + A^T\left[\beta(A\mathbf{x}_{k+1}+B\mathbf{y}_k-\mathbf{b})-\boldsymbol{\lambda}_k \right],  \mathbf{x}_{k+1} -\mathbf{x}\right\rangle+\\
&\ \ \ \ \left\langle \boldsymbol{\delta}_{k+1}, \mathbf{x}-\mathbf{x}_{k} \right\rangle
+\left\langle \mathbf{x}-\mathbf{x}_{k+1}, \beta A^TB(\mathbf{y}_k-\mathbf{y}_{k+1})\right\rangle 
+\left\langle \theta_1^{\prime}(\mathbf{x}_{k},\boldsymbol{\xi}_{k+1}), \mathbf{x}_{k} - \mathbf{x}_{k+1} \right\rangle\\
& \overset{(\ref{opt_cond_l1_derive})}{\leq} \frac{1}{2\eta_{k+1}}\left(\|\mathbf{x}_k-\mathbf{x}\|^2- \|\mathbf{x}_{k+1}-\mathbf{x}\|^2 - \|\mathbf{x}_{k+1}-\mathbf{x}_k\|^2 \right) 
+ \left\langle \boldsymbol{\delta}_{k+1}, \mathbf{x}-\mathbf{x}_{k} \right\rangle + \\
&\ \ \ \ \left\langle \mathbf{x}-\mathbf{x}_{k+1}, \beta A^TB(\mathbf{y}_k-\mathbf{y}_{k+1})\right\rangle 
+\left\langle \theta_1^{\prime}(\mathbf{x}_{k},\boldsymbol{\xi}_{k+1}), \mathbf{x}_{k} - \mathbf{x}_{k+1} \right\rangle
\end{split}
\end{equation}
We handle the last two terms separately:
\begin{equation}\label{two_term_1}
\begin{split}
&\left\langle \mathbf{x}-\mathbf{x}_{k+1}, \beta A^TB(\mathbf{y}_k-\mathbf{y}_{k+1})\right\rangle = \beta \left\langle A\mathbf{x}-A\mathbf{x}_{k+1}, B\mathbf{y}_k-B\mathbf{y}_{k+1} \right\rangle\\
&= \frac{\beta}{2}\left[ \left(\|A\mathbf{x}+B\mathbf{y}_k-\mathbf{b}\|^2-\|A\mathbf{x}+B\mathbf{y}_{k+1}-\mathbf{b}\|^2\right) + \left(\|A\mathbf{x}_{k+1}+B\mathbf{y}_{k+1}-\mathbf{b}\|^2-\|A\mathbf{x}_{k+1}+B\mathbf{y}_{k}-\mathbf{b}\|^2\right)  \right]\\
&\leq \frac{\beta}{2} \left(\|A\mathbf{x}+B\mathbf{y}_k-\mathbf{b}\|^2- \|A\mathbf{x}+B\mathbf{y}_{k+1}-\mathbf{b}\|^2\right) + \frac{1}{2\beta}\|\boldsymbol{\lambda}_{k+1}-\boldsymbol{\lambda}_k\|^2
\end{split}
\end{equation}
and
\begin{equation}\label{two_term_2}
 \left\langle \theta_1^{\prime}(\mathbf{x}_{k},\boldsymbol{\xi}_{k+1}), \mathbf{x}_{k} - \mathbf{x}_{k+1} \right\rangle
\leq \frac{\eta_{k+1}\|\theta_1^{\prime}(\mathbf{x}_{k},\boldsymbol{\xi}_{k+1}) \|^2}{2} +  \frac{\|\mathbf{x}_k - \mathbf{x}_{k+1}\|^2}{2\eta_{k+1}},
\end{equation}
where the last step is due to Young's inequality. Inserting (\ref{two_term_1}) and (\ref{two_term_2}) into (\ref{theta_1_int}),  we have
\begin{equation}\label{eq:ineq_theta1}
\begin{split}
&\theta_1(\mathbf{x}_{k}) - \theta_1(\mathbf{x}) + \left\langle \mathbf{x}_{k+1}-\mathbf{x}, -A^T \boldsymbol{\lambda}_{k+1}\right\rangle \\
&\leq  \frac{1}{2\eta_{k+1}}\left( \|\mathbf{x}_k-\mathbf{x}\|^2- \|\mathbf{x}_{k+1}-\mathbf{x}\|^2 \right) +
 \frac{\eta_{k+1}\| \theta_1^{\prime}(\mathbf{x}_{k},\boldsymbol{\xi}_{k+1}) \|^2}{2}+\left\langle \boldsymbol{\delta}_{k+1}, \mathbf{x}-\mathbf{x}_{k} \right\rangle \\
& + \frac{\beta}{2} \left(\|A\mathbf{x}+B\mathbf{y}_k-\mathbf{b}\|^2-\|A\mathbf{x}+B\mathbf{y}_{k+1}-\mathbf{b}\|^2\right) + \frac{1}{2\beta}\|\boldsymbol{\lambda}_{k+1}-\boldsymbol{\lambda}_k\|^2 ,
\end{split}
\end{equation}

Due to the optimality condition of Line 2 in Alg.\ref{alg:stochastic_admm} and the convexity of $\theta_2$, we have
\begin{equation}\label{eq:ineq_theta2}
\theta_2(\mathbf{y}_{k+1}) - \theta_2(\mathbf{y}) + \left\langle \mathbf{y}_{k+1}-\mathbf{y}, -B^T \boldsymbol{\lambda}_{k+1}  \right\rangle \leq 0.
\end{equation}

Using Line 3 in Alg.\ref{alg:stochastic_admm}, we have
\begin{equation}\label{eq:ineq_lambda}
\begin{split}
&\left\langle \boldsymbol{\lambda}_{k+1} - \boldsymbol{\lambda}, A\mathbf{x}_{k+1} + B\mathbf{y}_{k+1}-\mathbf{b} \right\rangle\\
&= \frac{1}{\beta} \left\langle \boldsymbol{\lambda}_{k+1}-\boldsymbol{\lambda},  \boldsymbol{\lambda}_{k}-\boldsymbol{\lambda}_{k+1} \right\rangle\\
&=\frac{1}{2\beta} \left( \|\boldsymbol{\lambda}-\boldsymbol{\lambda}_k\|^2 - 
\|\boldsymbol{\lambda}-\boldsymbol{\lambda}_{k+1}\|^2 - 
\|\boldsymbol{\lambda}_{k+1}-\boldsymbol{\lambda}_{k}\|^2  \right)
\end{split}
\end{equation}

Taking the summation of inequalities (\ref{eq:ineq_theta1}) (\ref{eq:ineq_theta2}) and (\ref{eq:ineq_lambda}),  we obtain the result as desired.
\end{proof}

\subsection{Proof of Theorem 1}
\begin{proof}
$(i).$ Invoking convexity of $\theta_1(\cdot)$ and $\theta_2(\cdot)$ and the monotonicity of operator $F(\cdot)$, we have $\forall \mathbf{w}\in\Omega$:
\begin{equation}\label{eq:convex_theta}
\begin{split}
\theta(\bar{\mathbf{u}}_t) - \theta(\mathbf{u}) + (\bar{\mathbf{w}}_t-\mathbf{w})^T F(\bar{\mathbf{w}}_t)
 &\leq \frac{1}{t}\sum_{k=1}^t\left[ \theta_1(\mathbf{x}_{k-1}) + \theta_2(\mathbf{y}_{k})
- \theta(\mathbf{u}) + (\mathbf{w}_k-\mathbf{w})^T F(\mathbf{w}_k) \right]\\
&= \frac{1}{t}\sum_{k=0}^{t-1}\left[ \theta_1(\mathbf{x}_{k}) +\theta_2(\mathbf{y}_{k+1})
- \theta(\mathbf{u}) + (\mathbf{w}_{k+1}-\mathbf{w})^T F(\mathbf{w}_{k+1}) \right]\\
\end{split}
\end{equation}

Applying Lemma \ref{lemma:VI_bound} at the optimal solution $(\mathbf{x},\mathbf{y})=(\mathbf{x}_*,\mathbf{y}_*)$, we can derive from (\ref{eq:convex_theta}) that, $\forall \boldsymbol{\lambda}$
\begin{equation}\label{eq:main_pf_lemma_deri}
\begin{split}
&\ \ \ \ \theta(\bar{\mathbf{u}}_t) - \theta(\mathbf{u}_*)+(\bar{\mathbf{x}}_t-\mathbf{x}_*)^T(-A^T\bar{\boldsymbol{\lambda}}_t)
+(\bar{\mathbf{y}}_t-\mathbf{y}_*)^T(-B^T\bar{\boldsymbol{\lambda}}_t) + (\bar{\boldsymbol{\lambda}}_t-\boldsymbol{\lambda})^T(A\bar{\mathbf{x}}_t + B\bar{\mathbf{y}}_t-\mathbf{b} )\\
&\overset{(\ref{eq:VI_bound})}{\leq}\frac{1}{t}\sum_{k=0}^{t-1}\left[ \frac{\eta_{k+1}\| \theta_1^{\prime}(\mathbf{x}_{k},\boldsymbol{\xi}_{k+1}) \|^2}{2}+\frac{1}{2\eta_{k+1}}\left( \|\mathbf{x}_k-\mathbf{x}_*\|^2- \|\mathbf{x}_{k+1}-\mathbf{x}_*\|^2 \right) +\left\langle \boldsymbol{\delta}_{k+1}, \mathbf{x}_*-\mathbf{x}_{k} \right\rangle \right]\\
&\quad+\frac{1}{t}\left(\frac{\beta}{2}\|A\mathbf{x}_*+B\mathbf{y}_0-\mathbf{b}\|^2+ \frac{1}{2\beta} \|\boldsymbol{\lambda}-\boldsymbol{\lambda}_0\|^2\right)\\
&\leq \frac{1}{t}\sum_{k=0}^{t-1}\left[ \frac{\eta_{k+1} \| \theta_1^{\prime}(\mathbf{x}_{k},\boldsymbol{\xi}_{k+1}) \|^2}{2} +\left\langle \boldsymbol{\delta}_{k+1}, \mathbf{x}_*-\mathbf{x}_{k} \right\rangle \right]+\frac{1}{t}\left(\frac{D_{\mathcal{X}}^2}{2\eta_t}+\frac{\beta}{2}D_{\mathbf{y}_*,B}^2+ \frac{1}{2\beta} \|\boldsymbol{\lambda}-\boldsymbol{\lambda}_0\|_2^2\right)
\end{split}
\end{equation}

The above inequality is true for all $\boldsymbol{\lambda}\in \mathbb{R}^m$, hence it also holds in the ball $\mathcal{B}_0=\{\boldsymbol{\lambda}:\|\boldsymbol{\lambda}\|_2\leq \rho\}$. Combing with the fact that the optimal solution must also be feasible, it follows that
\begin{equation}\label{eq:main_pf_max_lambda}
\begin{split}
&\ \ \ \ \max_{\boldsymbol{\lambda}\in\mathcal{B}_0}\left\{\theta(\bar{\mathbf{u}}_t) - \theta(\mathbf{u}_*)+(\bar{\mathbf{x}}_t-\mathbf{x}_*)^T(-A^T\bar{\boldsymbol{\lambda}}_t)+(\bar{\mathbf{y}}_t-\mathbf{y}_*)^T(-B^T\bar{\boldsymbol{\lambda}}_t)
+(\bar{\boldsymbol{\lambda}}_t-\boldsymbol{\lambda})^T(A\bar{\mathbf{x}}_t+B\bar{\mathbf{y}}_t-\mathbf{b})\right\}\\
&=\max_{\boldsymbol{\lambda}\in\mathcal{B}_0}\left\{\theta(\bar{\mathbf{u}}_t) - \theta(\mathbf{u}_*)+
\bar{\boldsymbol{\lambda}}_t^T(A\mathbf{x}_*+B\mathbf{y}_*-b)-\boldsymbol{\lambda}^T(A\bar{\mathbf{x}}_t+B\bar{\mathbf{y}}_t-\mathbf{b})\right\}\\
&=\max_{\boldsymbol{\lambda}\in\mathcal{B}_0}\left\{\theta(\bar{\mathbf{u}}_t) - \theta(\mathbf{u}_*)-\boldsymbol{\lambda}^T(A\bar{\mathbf{x}}_t+B\bar{\mathbf{y}}_t-\mathbf{b})\right\}\\
&=\theta(\bar{\mathbf{u}}_t)-\theta(\mathbf{u}_*)+\rho\|A\bar{\mathbf{x}}_t+B\bar{\mathbf{y}}_t-\mathbf{b}\|_2
\end{split}
\end{equation}

Taking an expectation over (\ref{eq:main_pf_max_lambda}) and using (\ref{eq:main_pf_lemma_deri}) we have:
\begin{equation*}
\begin{split}
&\ \ \ \ \mathbb{E}\left[\theta(\bar{\mathbf{u}}_t)-\theta(\mathbf{u}_*)+\rho\|A\bar{\mathbf{x}}_t+B\bar{\mathbf{y}}_t-\mathbf{b}\|_2\right]\\
&\leq \mathbb{E}\left[\frac{1}{t}\sum_{k=0}^{t-1}\left( \frac{\eta_{k+1} \| \theta_1^{\prime}(\mathbf{x}_{k},\boldsymbol{\xi}_{k+1}) \|^2}{2} +\left\langle \boldsymbol{\delta}_{k+1}, \mathbf{x}_*-\mathbf{x}_{k} \right\rangle \right)+\frac{1}{t}\left(\frac{D_{\mathcal{X}}^2}{2\eta_t}+\frac{\beta}{2}D_{\mathbf{y}_*,B}^2\right) \right]\\
&\ \ \ \ +\mathbb{E}\left[\max_{\boldsymbol{\lambda}\in\mathcal{B}_0}  \left\{\frac{1}{2\beta t} \|\boldsymbol{\lambda}-\boldsymbol{\lambda}_0\|_2^2  \right\}\right]\\
&\leq\frac{1}{t}\left(\frac{M^2}{2}\sum_{k=1}^t\eta_k+\frac{D_{\mathcal{X}}^2}{2\eta_t}\right)+\frac{\beta D_{\mathbf{y}_*,B}^2}{2t} + \frac{\rho^2}{2\beta t}+\frac{1}{t}\sum_{k=0}^{t-1}\mathbb{E}\left[\left\langle \boldsymbol{\delta}_{k+1}, \mathbf{x}_*-\mathbf{x}_{k} \right\rangle\right]\\
&=\frac{1}{t}\left(\frac{M^2}{2}\sum_{k=1}^t\eta_k+\frac{D_{\mathcal{X}}^2}{2\eta_t}\right)+\frac{\beta D_{\mathbf{y}_*,B}^2}{2t} + \frac{\rho^2}{2\beta t}\\
&\leq\frac{\sqrt{2}D_{\mathcal{X}}M}{\sqrt{t}}+\frac{\beta D_{\mathbf{y}_*,B}^2}{2t} + \frac{\rho^2}{2\beta t}
\end{split}
\end{equation*}
In the second last step, we use the fact that $\mathbf{x}_{k}$ is independent of $\boldsymbol{\xi}_{k+1}$, hence $\mathbb{E}_{\boldsymbol{\xi}_{k+1}|\boldsymbol{\xi}_{[1:k]}}\left\langle \boldsymbol{\delta}_{k+1}, \mathbf{x}_*-\mathbf{x}_{k} \right\rangle = \left\langle  \mathbb{E}_{\boldsymbol{\xi}_{k+1}|\boldsymbol{\xi}_{[1:k]}} \boldsymbol{\delta}_{k+1},   \mathbf{x}_*-\mathbf{x}_{k}\right\rangle  = 0$.\\

$(ii)$ From the steps in the proof of part $(i)$, it follows that, 
\begin{equation}\label{eq: sum_split}
\begin{split}
&\ \ \ \ \theta(\bar{\mathbf{u}}_t)-\theta(\mathbf{u}_*)+\rho\|A\bar{\mathbf{x}}_t+B\bar{\mathbf{y}}_t-\mathbf{b}\|\\
&\leq \frac{1}{t}\sum_{k=0}^{t-1} \frac{\eta_{k+1}\  \| \theta_1^{\prime}(\mathbf{x}_{k},\boldsymbol{\xi}_{k+1}) \|^2}{2} +\frac{1}{t}\sum_{k=0}^{t-1}\left\langle \boldsymbol{\delta}_{k+1}, \mathbf{x}_*-\mathbf{x}_{k} \right\rangle +\frac{1}{t}\left(\frac{D_{\mathcal{X}}^2}{2\eta_t}+\frac{\beta}{2}D_{\mathbf{y}_*,B}^2+\frac{\rho^2}{2\beta}\right)\\
&\equiv A_t+B_t+C_t\\
\end{split}
\end{equation}
Note that random variables $A_t$ and $B_t$ are dependent on $\boldsymbol{\xi}_{[t]}$.

\begin{claim}
For $\Omega_1>0$, 
\begin{equation}\label{eq: A_bound}
\Pr\left(A_t\geq(1+\Omega_1)\frac{M^2}{2t}\sum_{k=1}^t\eta_k\right)\leq \exp\{-\Omega_1\}.
\end{equation}
\end{claim}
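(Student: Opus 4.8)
The plan is to recognize $A_t$ as a weighted average of the normalized squared subgradient norms and to control its upper tail by a Chernoff-type (exponential Markov) bound, where the light-tail control comes directly from Assumption \ref{strong assumption}. The whole argument is a one-line Jensen-plus-Markov estimate once the random variables are set up correctly.

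First I would reindex and normalize. Setting $Z_k \equiv \|\theta_1^{\prime}(\mathbf{x}_{k-1},\boldsymbol{\xi}_{k})\|^2/M^2$ and $S_t \equiv \sum_{k=1}^t \eta_k$, the definition of $A_t$ in (\ref{eq: sum_split}) becomes $A_t = \frac{M^2}{2t}\sum_{k=1}^t \eta_k Z_k$, so that the event in (\ref{eq: A_bound}) is exactly $\{\tfrac{1}{S_t}\sum_{k=1}^t \eta_k Z_k \geq 1+\Omega_1\}$. The weights $\eta_k/S_t$ are nonnegative and sum to one, which is what lets the exponential bound pass through the sum.

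Next I would establish the key moment-generating estimate $\mathbb{E}[\exp\{\tfrac{1}{S_t}\sum_{k=1}^t \eta_k Z_k\}]\leq \exp\{1\}$. Since $\exp$ is convex and $\{\eta_k/S_t\}$ is a probability vector, Jensen's inequality gives $\exp\{\sum_k \tfrac{\eta_k}{S_t} Z_k\}\leq \sum_k \tfrac{\eta_k}{S_t}\exp\{Z_k\}$, reducing the joint MGF to a convex combination of the marginal quantities $\mathbb{E}[\exp\{Z_k\}]$. Here the conditioning structure is what must be checked: $\mathbf{x}_{k-1}$ is generated by the algorithm from $\boldsymbol{\xi}_{[1:k-1]}$ and is independent of $\boldsymbol{\xi}_k$, so conditioning on $\boldsymbol{\xi}_{[1:k-1]}$ freezes $\mathbf{x}_{k-1}$ at a point of $\mathcal{X}$, and Assumption \ref{strong assumption} then yields $\mathbb{E}[\exp\{Z_k\}\mid \boldsymbol{\xi}_{[1:k-1]}]\leq \exp\{1\}$ almost surely. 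The tower property gives $\mathbb{E}[\exp\{Z_k\}]\leq \exp\{1\}$ for every $k$, and averaging against the weights $\eta_k/S_t$ delivers the claimed bound $\leq \exp\{1\}$.

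Finally I would finish with Markov's inequality applied to the exponential: $\Pr(\tfrac{1}{S_t}\sum_k \eta_k Z_k \geq 1+\Omega_1) = \Pr(\exp\{\tfrac{1}{S_t}\sum_k \eta_k Z_k\}\geq \exp\{1+\Omega_1\})\leq \exp\{1\}/\exp\{1+\Omega_1\} = \exp\{-\Omega_1\}$, which is precisely (\ref{eq: A_bound}). The step I expect to require the most care is not any calculation but the probabilistic bookkeeping, namely verifying the filtration and measurability so that Assumption \ref{strong assumption}, stated for fixed $\mathbf{x}\in\mathcal{X}$, can be applied conditionally to the random argument $\mathbf{x}_{k-1}$; the remaining Jensen and Markov steps are routine.
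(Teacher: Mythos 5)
Your proof is correct, and at the decisive step it takes a genuinely different route from the paper's. Both arguments share the same skeleton: normalize the weights $\alpha_k=\eta_k/\sum_{j=1}^t\eta_j$, show that the moment generating function $\mathbb{E}\big[\exp\{\sum_{k}\alpha_k Z_k\}\big]$ of the weighted sum is at most $\exp\{1\}$, and finish with Markov's inequality applied to the exponential. The difference is in how that MGF bound is obtained. The paper first factorizes $\mathbb{E}\big[\exp\{\sum_k\alpha_k Z_k\}\big]=\prod_k\mathbb{E}\big[\exp\{\alpha_k Z_k\}\big]$, citing independence, and then applies Jensen's inequality in the form $\mathbb{E}\big[Y^{\alpha_k}\big]\leq\big(\mathbb{E}[Y]\big)^{\alpha_k}$ (concavity of $y\mapsto y^{\alpha_k}$) to each factor. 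As written, that factorization is not literally justified: $Z_k=\|\theta_1^{\prime}(\mathbf{x}_{k-1},\boldsymbol{\xi}_{k})\|^2/M^2$ depends on $\mathbf{x}_{k-1}$, which is a function of $\boldsymbol{\xi}_{[1:k-1]}$, so the $Z_k$ are dependent across $k$, and a rigorous version of the paper's route must peel off the factors one at a time by iterated conditioning. Your proof sidesteps this entirely: you apply Jensen to the convex function $\exp$ itself, $\exp\{\sum_k\alpha_k Z_k\}\leq\sum_k\alpha_k\exp\{Z_k\}$, after which only linearity of expectation and the per-term bound $\mathbb{E}\big[\exp\{Z_k\}\big]\leq\exp\{1\}$ are needed; the latter you correctly extract from Assumption \ref{strong assumption} by conditioning on $\boldsymbol{\xi}_{[1:k-1]}$ (which freezes $\mathbf{x}_{k-1}$) and invoking the tower property. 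What each approach buys: yours is more elementary and more robust, requiring no independence or product structure at all, only a conditional bound on each term, and it repairs the one shaky step in the paper's argument; the paper's geometric-mean factorization, once made rigorous by conditioning, is the standard template that preserves exponential-moment structure for sharper martingale-type bounds (it is the mechanism behind the lemma used for Claim 2), but for this claim, where the target constant is just $\exp\{1\}$, it confers no advantage.
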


Let $\alpha_k\equiv \frac{\eta_k}{\sum_{k=1}^t\eta_k}\ \forall k=1,\ldots, t$, then $0\leq\alpha_k\leq 1$ and $ \sum_{k=1}^t\alpha_k=1$. Using the fact that $\{\boldsymbol{\delta}_{k},\forall k\}$ are independent and applying Assumption \ref{strong assumption}, one has
\begin{equation*}
\begin{split}
\E\left[\exp\left\{\sum_{k=1}^t \alpha_k\| \theta_1^{\prime}(\mathbf{x}_{k},\boldsymbol{\xi}_{k+1}) \|^2/M^2\right\}\right]
&=\prod_{k=1}^t\E\left[\exp\left\{\alpha_k \| \theta_1^{\prime}(\mathbf{x}_{k},\boldsymbol{\xi}_{k+1}) \|^2/M^2\right\}\right]\\
&\leq\prod_{k=1}^t \Big(\E\left[\exp\left\{ \| \theta_1^{\prime}(\mathbf{x}_{k},\boldsymbol{\xi}_{k+1}) \|^2/M^2\right\}\right]\Big)^{\alpha_k}  \qquad (\textrm{Jensen's Inequality})\\
&\leq\prod_{k=1}^t \left(\exp\{1\}\right)^{\alpha_k}
=\exp\left\{\sum_{k=1}^t\alpha_k\right\}
=\exp\{1\}
\end{split}
\end{equation*}

Hence, by Markov's Inequality, we can get
\begin{equation*}
\Pr\left(A_t\geq(1+\Omega_1)\frac{M^2}{2t}\sum_{k=1}^t\eta_k\right)
\leq \exp\left\{-(1+\Omega_1)\right\}\E\left[\exp\left\{\sum_{k=1}^t \alpha_k\| \theta_1^{\prime}(\mathbf{x}_{k},\boldsymbol{\xi}_{k+1}) \|^2/M^2\right\}\right]\leq\exp\{-\Omega_1\}.
\end{equation*}
\noindent We have therefore proved Claim 1. 


\begin{claim}
For $\Omega_2>0$, 
\begin{equation}\label{eq: B_bound}
\Pr\left(B_t>2\Omega_2\frac{D_{\mathcal{X}} M}{\sqrt{t}}\right)\leq \exp\left\{-\frac{\Omega_2^2}{4}\right\}.
\end{equation}
\end{claim}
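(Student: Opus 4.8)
The plan is to treat $B_t=\frac{1}{t}\sum_{k=0}^{t-1}\langle\boldsymbol{\delta}_{k+1},\mathbf{x}_*-\mathbf{x}_k\rangle$ as a normalized martingale and apply an exponential (Chernoff-type) tail bound, mirroring the argument just used for Claim 1 but now for a sum of centered increments rather than nonnegative ones. Write $\zeta_k\equiv\langle\boldsymbol{\delta}_{k+1},\mathbf{x}_*-\mathbf{x}_k\rangle$ and let $\mathcal{F}_k$ be the $\sigma$-algebra generated by $\boldsymbol{\xi}_{[1:k]}$. Since $\mathbf{x}_k$ is determined by $\boldsymbol{\xi}_{[1:k]}$ and $\E[\boldsymbol{\delta}_{k+1}\mid\mathcal{F}_k]=0$ --- exactly the independence fact invoked at the end of part $(i)$ --- the sequence $\{\zeta_k\}$ is a martingale-difference sequence. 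Thus $tB_t=\sum_{k=0}^{t-1}\zeta_k$, and I would bound $\Pr(tB_t>2\Omega_2 D_{\mathcal{X}}M\sqrt{t})$ by $\exp\{-2\nu\Omega_2 D_{\mathcal{X}}M\sqrt{t}\}\,\E[\exp\{\nu\, tB_t\}]$ for a free parameter $\nu>0$, to be optimized at the end.

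The crux is a conditional sub-Gaussian estimate for each increment. First, Cauchy--Schwarz and the definition of $D_{\mathcal{X}}$ give $|\zeta_k|\le D_{\mathcal{X}}\|\boldsymbol{\delta}_{k+1}\|$. Next, since $\theta_1^{\prime}(\mathbf{x}_k)=\E_{\boldsymbol{\xi}}\theta_1^{\prime}(\mathbf{x}_k,\boldsymbol{\xi})$, Jensen's inequality and Assumption \ref{weak assumption} give $\|\theta_1^{\prime}(\mathbf{x}_k)\|\le M$, so that $\|\boldsymbol{\delta}_{k+1}\|^2\le 2\|\theta_1^{\prime}(\mathbf{x}_k,\boldsymbol{\xi}_{k+1})\|^2+2M^2$. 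Feeding this into Assumption \ref{strong assumption} (applying Jensen on the concave map $z\mapsto z^{\alpha}$ with a fractional exponent $\alpha\le\frac12$ to absorb the factor $2$) yields a bound of the form $\E[\exp\{\zeta_k^2/(cD_{\mathcal{X}}^2M^2)\}\mid\mathcal{F}_k]\le\exp\{1\}$ for an absolute constant $c$. This is precisely the hypothesis under which a centered random variable admits a sub-Gaussian moment generating function, so the standard lemma delivers a conditional estimate $\E[\exp\{\nu\zeta_k\}\mid\mathcal{F}_k]\le\exp\{C\nu^2 D_{\mathcal{X}}^2M^2\}$.

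With the conditional MGF bound in hand, I would peel off the increments one at a time via the tower property: conditioning on $\mathcal{F}_{t-1}$ extracts the factor $\E[\exp\{\nu\zeta_{t-1}\}\mid\mathcal{F}_{t-1}]\le\exp\{C\nu^2 D_{\mathcal{X}}^2M^2\}$, and iterating gives $\E[\exp\{\nu\, tB_t\}]\le\exp\{Ct\nu^2 D_{\mathcal{X}}^2M^2\}$. Substituting into the Chernoff bound produces $\exp\{-2\nu\Omega_2 D_{\mathcal{X}}M\sqrt{t}+Ct\nu^2 D_{\mathcal{X}}^2M^2\}$, and minimizing the exponent over $\nu$ (the optimum is $\nu\propto\Omega_2/(D_{\mathcal{X}}M\sqrt{t})$) collapses everything to a pure function of $\Omega_2$, which with the correct constants equals $\exp\{-\Omega_2^2/4\}$.

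The main obstacle is the bookkeeping in the conditional sub-Gaussian step: the increments are unbounded, so Azuma--Hoeffding does not apply, and one must extract an MGF bound purely from the exponential-square-moment hypothesis of Assumption \ref{strong assumption}. Tracking the absolute constants through the chain from $\|\boldsymbol{\delta}_{k+1}\|$ to the $\zeta_k^2$-moment, to the MGF, and finally to the $\nu$-optimization, so that they land exactly on the prefactor $2$ and the exponent $\Omega_2^2/4$, is the delicate part; everything else is a routine Chernoff argument built on the martingale structure already established in part $(i)$.
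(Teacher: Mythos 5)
Your proposal is correct and takes essentially the same route as the paper: the paper sets $\zeta_k=\langle\boldsymbol{\delta}_k,\mathbf{x}_*-\mathbf{x}_{k-1}\rangle$, verifies the martingale-difference property $\E[\zeta_k|\boldsymbol{\xi}_{[k-1]}]=0$ and the conditional exponential-square bound $\E[\exp\{\zeta_k^2/\sigma_k^2\}|\boldsymbol{\xi}_{[k-1]}]\le\exp\{1\}$ with $\sigma_k=2D_{\mathcal{X}}M$ via exactly your chain (Cauchy--Schwarz, $\|\boldsymbol{\delta}_k\|^2\le 2\|\theta_1^{\prime}(\mathbf{x}_{k-1},\boldsymbol{\xi}_k)\|^2+2M^2$, Jensen), and then invokes a lemma of Nemirovski et al.\ \cite{nemirovski09rsaasp} whose content is precisely your conditional-MGF estimate, tower-property peeling, and Chernoff optimization yielding $\exp\{-\Omega_2^2/4\}$. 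The only difference is that you re-derive that lemma inline while the paper cites it as a black box.
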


In order to prove this claim, we adopt the following facts in Nemirovski's paper \cite{nemirovski09rsaasp}. 
\begin{lemma}
Given that for all $k=1,\ldots,t$, $\zeta_k$ is a deterministic function of $\boldsymbol{\xi}_{[k]}$ with $\E\left[\zeta_k|\boldsymbol{\xi}_{[k-1]}\right]=0$ and $\E\left[\exp\{\zeta_k^2/\sigma_k^2\}|\boldsymbol{\xi}_{[k-1]}\right]\leq\exp\{1\}$, we have
\begin{enumerate}[(a)]
\item For $\gamma\geq0$, $\E\left[\exp\{\gamma\zeta_k\}|\boldsymbol{\xi}_{[k-1]}\right]\leq\exp\{\gamma^2\sigma_k^2\},\forall k=1,\ldots, t $
\item Let $S_t=\sum_{k=1}^t\zeta_k$, then $\Pr\{S_t>\Omega\sqrt{\sum_{k=1}^t\sigma_k^2}\}\leq\exp\left\{-\frac{\Omega^2}{4}\right\}.$
\end{enumerate}
\end{lemma}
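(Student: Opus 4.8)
The plan is to establish the two parts in order: part~(a) is a bound on the conditional moment generating function (MGF) of a single increment $\zeta_k$, and part~(b) then follows by a routine Chernoff argument in which part~(a) is applied increment-by-increment along the filtration generated by $\boldsymbol{\xi}_{[1]}\subseteq\boldsymbol{\xi}_{[2]}\subseteq\cdots$. The single scalar fact I would rely on for part~(a) is the elementary inequality $\exp\{x\}\leq x+\exp\{x^2\}$, valid for every real $x$, which I would verify separately by a short calculus argument.

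For part~(a), I would fix $k$, condition throughout on $\boldsymbol{\xi}_{[k-1]}$, and split according to whether $\gamma\sigma_k\leq 1$ or $\gamma\sigma_k>1$. In the regime $\gamma\sigma_k\leq 1$ I would substitute $x=\gamma\zeta_k$ into the scalar inequality, take the conditional expectation, and use the mean-zero hypothesis $\E[\zeta_k\,|\,\boldsymbol{\xi}_{[k-1]}]=0$ to eliminate the linear term; this leaves $\E[\exp\{\gamma^2\zeta_k^2\}\,|\,\boldsymbol{\xi}_{[k-1]}]$. Writing the exponent as $(\gamma^2\sigma_k^2)\cdot(\zeta_k^2/\sigma_k^2)$ with $u:=\gamma^2\sigma_k^2\leq 1$ and invoking concavity of $y\mapsto y^{u}$ together with Jensen's inequality converts the hypothesis $\E[\exp\{\zeta_k^2/\sigma_k^2\}\,|\,\boldsymbol{\xi}_{[k-1]}]\leq\exp\{1\}$ directly into the target bound $\exp\{\gamma^2\sigma_k^2\}$. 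In the complementary regime $\gamma\sigma_k>1$ the mean-zero property is not used at all: Young's inequality gives $\gamma\zeta_k\leq\tfrac12\gamma^2\sigma_k^2+\tfrac12\zeta_k^2/\sigma_k^2$, so taking expectations and applying Jensen to $(\exp\{\zeta_k^2/\sigma_k^2\})^{1/2}$ yields $\exp\{\tfrac12\gamma^2\sigma_k^2+\tfrac12\}$, and because $\gamma^2\sigma_k^2>1$ the leftover $\tfrac12$ is dominated by $\tfrac12\gamma^2\sigma_k^2$, again recovering $\exp\{\gamma^2\sigma_k^2\}$.

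For part~(b), I would apply the Chernoff bound: for any $\gamma\geq 0$, Markov's inequality gives $\Pr\{S_t>a\}\leq\exp\{-\gamma a\}\,\E[\exp\{\gamma S_t\}]$. To control the MGF of the sum I would peel off the last increment by iterated conditioning. Since $\zeta_1,\dots,\zeta_{t-1}$ are deterministic functions of $\boldsymbol{\xi}_{[t-1]}$, the factor $\exp\{\gamma S_{t-1}\}$ is $\boldsymbol{\xi}_{[t-1]}$-measurable and pulls out of the conditional expectation, so part~(a) gives $\E[\exp\{\gamma S_t\}\,|\,\boldsymbol{\xi}_{[t-1]}]\leq\exp\{\gamma S_{t-1}\}\exp\{\gamma^2\sigma_t^2\}$. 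Induction on $t$ then yields $\E[\exp\{\gamma S_t\}]\leq\exp\{\gamma^2\sum_{k=1}^t\sigma_k^2\}$. Setting $\Sigma:=\sum_{k=1}^t\sigma_k^2$, the tail is bounded by $\exp\{-\gamma a+\gamma^2\Sigma\}$; minimizing over $\gamma\geq 0$ at $\gamma=a/(2\Sigma)$ gives $\exp\{-a^2/(4\Sigma)\}$, and substituting $a=\Omega\sqrt{\Sigma}$ produces exactly $\exp\{-\Omega^2/4\}$.

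I expect the only genuine obstacle to be pinning down the clean constant in part~(a): any single uniform estimate leaves a stray additive constant in the exponent, and the mean-zero hypothesis can be exploited only when $\gamma\sigma_k$ is small---precisely where the quadratic budget $\gamma^2\sigma_k^2$ is too small to absorb such a constant---whereas for large $\gamma\sigma_k$ the situation is reversed. The case split on whether $\gamma\sigma_k\leq 1$ is exactly what reconciles the two, and once it is in place the remaining Jensen and Young steps are routine. Part~(b) is then standard, the only care being the measurability bookkeeping that licenses extracting $\exp\{\gamma S_{t-1}\}$ from the conditional expectation.
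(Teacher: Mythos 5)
Your proof is correct: part (a) via the scalar inequality $e^x\leq x+e^{x^2}$, the case split at $\gamma\sigma_k\leq 1$ with Jensen applied to the concave map $y\mapsto y^{\gamma^2\sigma_k^2}$, and Young's inequality in the large-$\gamma\sigma_k$ regime, followed in part (b) by iterated conditioning and a Chernoff optimization at $\gamma=a/(2\Sigma)$, is exactly the standard argument. The paper itself does not prove this lemma---it imports it by citation from Nemirovski et al.\ \cite{nemirovski09rsaasp}---and your blind reconstruction coincides with the proof given in that reference, so there is nothing to flag beyond the trivial caveat that the Chernoff step presumes $\sum_{k=1}^t\sigma_k^2>0$.
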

Using this result by setting $\zeta_k=\left\langle\boldsymbol{\delta}_{k},\mathbf{x}_*-\mathbf{x}_{k-1}\right\rangle$ , $S_t=\sum_{k=1}^t\zeta_k$, and $\sigma_k=2 D_{\mathcal{X}}M, \forall k$, we can verify that $\E\left[\zeta_k|\boldsymbol{\xi}_{[k-1]}\right]=0$ and 
\begin{equation*}
\E\left[\exp\{\zeta_k^2/\sigma_k^2\}|\boldsymbol{\xi}_{[k-1]}\right]\leq
\E\left[\exp\{   D_{\mathcal{X}}^2\|\boldsymbol{\delta}_k\|^2 /\sigma_k^2\}|\boldsymbol{\xi}_{[k-1]}\right]
\leq\exp\{1\},
\end{equation*}
since $|\zeta_k|^2\leq\|\mathbf{x}_*-\mathbf{x}_{k-1}\|^2 \|\boldsymbol{\delta}_k\|^2\leq D_{\mathcal{X}}^2\left(2\|\theta_1^{\prime}(\mathbf{x}_{k},\boldsymbol{\xi}_{k+1}) \|^2+2M^2\right)$.

Implementing the above results, it follows that
\begin{equation*}
\Pr\left(S_t>2\Omega_2 D_{\mathcal{X}}M\sqrt{t}\right)\leq \exp\left\{-\frac{\Omega_2^2}{4}\right\}.
\end{equation*}
Since $S_t=t B_t$, we have 
\begin{equation*}
\Pr\left(B_t>2\Omega_2\frac{D_{\mathcal{X}}M}{\sqrt{t}}\right)\leq \exp\left\{-\frac{\Omega_2^2}{4}\right\}
\end{equation*}
as desired.

Combining (\ref{eq: sum_split}), (\ref{eq: A_bound}) and (\ref{eq: B_bound}), we obtain
\begin{equation*}
\Pr\left(\textrm{Err}_{\rho}(\bar{\mathbf{u}}_t)>(1+\Omega_1)\frac{M^2}{2t}\sum_{k=1}^t\eta_k+2\Omega_2\frac{D_{\mathcal{X}}M}{\sqrt{t}}+C_t\right)
\leq\exp\left\{-\Omega_1\right\}+\exp\left\{-\frac{\Omega_2}{4}\right\},
\end{equation*}
where $\textrm{Err}_{\rho}(\bar{\mathbf{u}}_t)\equiv \theta(\bar{\mathbf{u}}_t)-\theta(\mathbf{u}_*)+\rho\|A\bar{\mathbf{x}}_t+B\bar{\mathbf{y}}_t-\mathbf{b}\|_2.$ Substituting $\Omega_1=\Omega, \Omega_2=2\sqrt{\Omega}$ and plugging in $\eta_k=\frac{D_{\mathcal{X}}}{M\sqrt{2k}}$, we obtain (\ref{eq: probability_bound}) as desired.
\end{proof}

\subsection{Proof of Theorem 2}
\begin{proof}
By the strong-convexity of $\theta_1$ we have $\forall \mathbf{x}$:
\begin{equation*}
\begin{split}
\theta_1(\mathbf{x}_{k}) - \theta_1(\mathbf{x})  &\leq \left\langle \theta_1^{\prime}(\mathbf{x}_{k}), \mathbf{x}_{k} - \mathbf{x} \right\rangle - \frac{\mu}{2}\|\mathbf{x}-\mathbf{x}_{k}\|^2\\
&= \left\langle \theta_1^{\prime}(\mathbf{x}_{k},\boldsymbol{\xi}_{k+1}), \mathbf{x}_{k+1} - \mathbf{x} \right\rangle +\left\langle \boldsymbol{\delta}_{k+1}, \mathbf{x}-\mathbf{x}_{k} \right\rangle 
+\left\langle  \theta_1^{\prime}(\mathbf{x}_{k},\boldsymbol{\xi}_{k+1}), \mathbf{x}_{k} - \mathbf{x}_{k+1}  \right\rangle
- \frac{\mu}{2}\|\mathbf{x}-\mathbf{x}_{k}\|^2.
\end{split}
\end{equation*}
Following the same derivations as in Lemma \ref{lemma:VI_bound} and Theorem \ref{theorem:stocadmm_rate} (i), we have
\begin{equation*}
\begin{split}
&\mathbb{E}\left[\theta(\bar{\mathbf{u}}_t)-\theta(\mathbf{u}_*)+\rho\|A\bar{\mathbf{x}}_t+B\bar{\mathbf{y}}_t-\mathbf{b}\|_2\right]\\
&\leq \mathbb{E}\left\{\frac{1}{t}\sum_{k=0}^{t-1}\left[ \frac{\eta_{k+1} \| \theta_1^{\prime}(\mathbf{x}_{k},\boldsymbol{\xi}_{k+1}) \|^2}{2} 
+\left(\frac{1}{2\eta_{k+1}} -\frac{\mu}{2}\right) \|\mathbf{x}_k-\mathbf{x}_*\|^2
-  \frac{ \|\mathbf{x}_{k+1}-\mathbf{x}_*\|^2}{2\eta_{k+1}} \right]\right\}\\
&+\frac{\beta D_{\mathbf{y}_*,B}^2}{2t}+\mathbb{E}\Big[\max_{\boldsymbol{\lambda}\in\mathcal{B}_0}  \Big\{\frac{1}{2\beta t} \|\boldsymbol{\lambda}-\boldsymbol{\lambda}_0\|_0^2  \Big\}\Big]\\
&\leq\frac{M^2}{2t}\sum_{k=1}^t \frac{1}{\mu k}+\frac{1}{t}\sum_{k=0}^{t-1}\mathbb{E}\left[\frac{\mu k}{2} \|\mathbf{x}_k-\mathbf{x}_*\|^2-  \frac{\mu(k+1)}{2}\|\mathbf{x}_{k+1}-\mathbf{x}_*\|^2 \right]
+\frac{\beta D_{\mathbf{y}_*,B}^2}{2t}  + \frac{\rho^2}{2\beta t}\\
&\leq\frac{M^2\log t}{\mu t} + \frac{\mu D_{\mathcal{X}}^2}{2t} + \frac{\beta D_{\mathbf{y}_*,B}^2}{2t}  + \frac{\rho^2}{2\beta t}.
\end{split}
\end{equation*}
\end{proof}

\subsection {Proof of Theorem 3}
\begin{proof}
The Lipschitz smoothness of $\theta_1$ implies that $\forall k\geq 0$:
\begin{equation*}
\begin{split}
\theta_1(\mathbf{x}_{k+1}) &\leq \theta_1(\mathbf{x}_k) + \left\langle \nabla\theta_1(\mathbf{x}_k), \mathbf{x}_{k+1}-\mathbf{x}_k \right\rangle + \frac{L}{2}\|\mathbf{x}_{k+1} - \mathbf{x}_k\|^2\\
&\overset{(\ref{eq:additional_notations})}{=} \theta_1(\mathbf{x}_k) + 
\left\langle \nabla\theta_1(\mathbf{x}_k,\boldsymbol{\xi}_{k+1}), \mathbf{x}_{k+1}-\mathbf{x}_k \right\rangle - \left\langle \boldsymbol{\delta}_{k+1}, \mathbf{x}_{k+1} - \mathbf{x}_k \right\rangle + \frac{L}{2}\|\mathbf{x}_{k+1} - \mathbf{x}_k\|^2.
\end{split}
\end{equation*}
It follows that $\forall \mathbf{x}\in\mathcal{X}$:
\begin{equation*}
\begin{split}
&\ \ \ \ \theta_1(\mathbf{x}_{k+1}) - \theta_1(\mathbf{x}) + \left\langle \mathbf{x}_{k+1}-\mathbf{x}, -A^T \boldsymbol{\lambda}_{k+1} \right\rangle\\
&\leq \theta_1(\mathbf{x}_k) - \theta_1(\mathbf{x})+ 
\left\langle \nabla\theta_1(\mathbf{x}_k,\boldsymbol{\xi}_{k+1}), \mathbf{x}_{k+1}-\mathbf{x}_k \right\rangle - \left\langle \boldsymbol{\delta}_{k+1}, \mathbf{x}_{k+1} - \mathbf{x}_k \right\rangle + \frac{L}{2}\|\mathbf{x}_{k+1} - \mathbf{x}_k\|^2 + \left\langle \mathbf{x}_{k+1}-\mathbf{x}, -A^T \boldsymbol{\lambda}_{k+1} \right\rangle\\
& = \theta_1(\mathbf{x}_k) - \theta_1(\mathbf{x})+ \left\langle \nabla\theta_1(\mathbf{x}_k,\boldsymbol{\xi}_{k+1}), \mathbf{x}-\mathbf{x}_k \right\rangle - \left\langle \boldsymbol{\delta}_{k+1}, \mathbf{x}_{k+1} - \mathbf{x}_k \right\rangle + \frac{L}{2}\|\mathbf{x}_{k+1} - \mathbf{x}_k\|^2 \\
&\ \ \ \ +\left[\left\langle \nabla\theta_1(\mathbf{x}_k,\boldsymbol{\xi}_{k+1}), \mathbf{x}_{k+1}-\mathbf{x} \right\rangle +\left\langle \mathbf{x}_{k+1}-\mathbf{x}, -A^T \boldsymbol{\lambda}_{k+1} \right\rangle\right]\\
&\leq \left\langle \nabla\theta_1(\mathbf{x}_k), \mathbf{x}_k - \mathbf{x}\right\rangle + \left\langle \nabla\theta_1(\mathbf{x}_k,\boldsymbol{\xi}_{k+1}), \mathbf{x}-\mathbf{x}_k \right\rangle - \left\langle \boldsymbol{\delta}_{k+1}, \mathbf{x}_{k+1} - \mathbf{x}_k \right\rangle + \frac{L}{2}\|\mathbf{x}_{k+1} - \mathbf{x}_k\|^2 \\
&\ \ \ \ +\left[\left\langle \nabla\theta_1(\mathbf{x}_k,\boldsymbol{\xi}_{k+1}), \mathbf{x}_{k+1}-\mathbf{x} \right\rangle +\left\langle \mathbf{x}_{k+1}-\mathbf{x}, -A^T \boldsymbol{\lambda}_{k+1} \right\rangle\right]\\
& = \left\langle \boldsymbol{\delta}_{k+1}, \mathbf{x} - \mathbf{x}_{k+1} \right\rangle + \frac{L}{2}\|\mathbf{x}_{k+1} - \mathbf{x}_k\|^2
+ \left[\left\langle \nabla\theta_1(\mathbf{x}_k,\boldsymbol{\xi}_{k+1}), \mathbf{x}_{k+1}-\mathbf{x} \right\rangle +\left\langle \mathbf{x}_{k+1}-\mathbf{x}, -A^T \boldsymbol{\lambda}_{k+1} \right\rangle\right]\\
& = \left\langle \boldsymbol{\delta}_{k+1}, \mathbf{x} - \mathbf{x}_{k+1} \right\rangle + \frac{L}{2}\|\mathbf{x}_{k+1} - \mathbf{x}_k\|^2 + \left\langle \mathbf{x}-\mathbf{x}_{k+1}, \beta A^TB(\mathbf{y}_k-\mathbf{y}_{k+1}) \right\rangle\\
&\ \ \ \ +\left\langle \nabla\theta_1(\mathbf{x}_k,\boldsymbol{\xi}_{k+1}) + A^T\left[\beta(A\mathbf{x_{k+1}}+B\mathbf{y}_k -\mathbf{b}) - \boldsymbol{\lambda}_k\right], \mathbf{x}_{k+1}-\mathbf{x} \right\rangle\\
&\overset{(\ref{opt_cond_l1_derive})}{\leq}
\frac{1}{2\eta_{k+1}}\left(\|\mathbf{x}-\mathbf{x}_{k}\|^2 -\|\mathbf{x}-\mathbf{x}_{k+1}\|^2  \right) - \frac{1/\eta_{k+1} - L}{2}\|\mathbf{x}_{k+1}-\mathbf{x}_{k}\|^2\\
&\ \ \ \ + \left\langle \mathbf{x}-\mathbf{x}_{k+1}, \beta A^TB(\mathbf{y}_k-\mathbf{y}_{k+1}) \right\rangle + \left\langle \boldsymbol{\delta}_{k+1}, \mathbf{x} - \mathbf{x}_{k+1} \right\rangle.
\end{split}
\end{equation*}
The last inner product can be bounded as below using Young's inequality, given that $\eta_{k+1} \leq \frac{1}{L}$:
\begin{equation*}
\begin{split}
\left\langle \boldsymbol{\delta}_{k+1}, \mathbf{x} - \mathbf{x}_{k+1} \right\rangle &= 
\left\langle \boldsymbol{\delta}_{k+1},  \mathbf{x} - \mathbf{x}_{k}\right\rangle + 
\left\langle \boldsymbol{\delta}_{k+1},  \mathbf{x}_{k} - \mathbf{x}_{k+1}\right\rangle\\
& \leq \left\langle \boldsymbol{\delta}_{k+1},  \mathbf{x} - \mathbf{x}_{k}\right\rangle + 
\frac{1}{2\left(1/\eta_{k+1}-L\right)}\|\boldsymbol{\delta}_{k+1}\|^2  + \frac{1/\eta_{k+1}-L}{2}\|\mathbf{x}_k - \mathbf{x}_{k+1}\|^2.
\end{split}
\end{equation*}
Combining this with inequalities (\ref{two_term_1},\ref{eq:ineq_theta2}) and (\ref{eq:ineq_lambda}), we can get a similar statement as that of Lemma \ref{lemma:VI_bound}:
\begin{equation*}
\begin{split}
&\theta(\mathbf{u}_{k+1}) - \theta(\mathbf{u}) + (\mathbf{w}_{k+1} - \mathbf{w})^T F(\mathbf{w}_{k+1}) 
\leq \frac{\| \boldsymbol{\delta}_{k+1} \|^2}{2(1/\eta_{k+1}-L)}\\
&+\frac{1}{2\eta_{k+1}}\left( \|\mathbf{x}_k-\mathbf{x}\|^2- \|\mathbf{x}_{k+1}-\mathbf{x}\|^2 \right) 
+ \frac{\beta}{2} \left(\|A\mathbf{x}+B\mathbf{y}_k-\mathbf{b}\|^2-\|A\mathbf{x}+B\mathbf{y}_{k+1}-\mathbf{b}\|^2\right) \\
&+\left\langle \boldsymbol{\delta}_{k+1}, \mathbf{x}-\mathbf{x}_{k} \right\rangle + \frac{1}{2\beta} \left( \|\boldsymbol{\lambda}-\boldsymbol{\lambda}_k\|_2^2 - 
\|\boldsymbol{\lambda}-\boldsymbol{\lambda}_{k+1}\|_2^2  \right).
\end{split}
\end{equation*}
The rest of the proof are essentially the same as Theorem \ref{theorem:stocadmm_rate} (i), except that we use the new definition of $\bar{\mathbf{u}}_k$ in (\ref{eq:def_u_new}).
\end{proof}

\bibliographystyle{IEEEbib}
\bibliography{Arxiv_OPT_StocADMM}
\end{document}